
\documentclass{article}

\usepackage{microtype}
\usepackage{graphicx}
\usepackage{booktabs} 
\usepackage[numbers]{natbib}

\usepackage{hyperref}


\newcommand{\ours}{\textsc{DeSA}{}}
\newcommand{\ie}[0]{{\textit{i.e.},}}
\newcommand{\eg}[0]{{\textit{e.g.},}}


\usepackage[accepted]{icml2024}

\usepackage{amsmath}
\usepackage{amssymb}
\usepackage{mathtools}
\usepackage{amsthm}

\usepackage[noend]{algpseudocode}

\usepackage{multirow}
\usepackage{adjustbox}
\usepackage[flushleft]{threeparttable}

\usepackage{subfig}
\usepackage{wrapfig}





\newcommand{\x}{\mathbf{x}}

\newtheorem{thm}{Theorem}

\newtheorem{lem}{Lemma}
\newtheorem{claim}{Claim}
\newtheorem{prop}[thm]{Proposition}







\usepackage{pifont}
\newcommand{\cmark}{\ding{51}}%
\newcommand{\xmark}{\ding{55}}%

\newcommand{\ks}[1]{{\color{green}[Kartik : #1]}}

\newcommand{\xl}[1]{{\color{blue}[Xiaoxiao : #1]}}
\newcommand{\revision}[1]{{\color{blue}#1 }}

\definecolor{kb}{RGB}{248, 185, 60}
\newcommand{\CY}[1]{{\color{kb}[CY: #1]}}


\usepackage{amsmath,amsfonts,bm}









\def\eqref#1{equation~\ref{#1}}









\def\1{\bm{1}}










\DeclareMathAlphabet{\mathsfit}{\encodingdefault}{\sfdefault}{m}{sl}
\SetMathAlphabet{\mathsfit}{bold}{\encodingdefault}{\sfdefault}{bx}{n}













\DeclareMathOperator*{\argmin}{arg\,min}


\usepackage[capitalize,noabbrev]{cleveref}

\theoremstyle{plain}
\newtheorem{theorem}{Theorem}[section]

\theoremstyle{definition}
\newtheorem{definition}[theorem]{Definition}

\theoremstyle{remark}
\newtheorem{remark}[theorem]{Remark}

\usepackage[textsize=tiny]{todonotes}

\icmltitlerunning{Overcoming Data and Model heterogeneities in Decentralized Federated Learning via Synthetic Anchors}

\begin{document}

\twocolumn[
\icmltitle{Overcoming Data and Model heterogeneities in Decentralized Federated Learning via Synthetic Anchors}



\icmlsetsymbol{equal}{*}

\begin{icmlauthorlist}
\icmlauthor{Chun-Yin Huang}{eceubc,vector}
\icmlauthor{Kartik Srinivas}{eceubc,iit}
\icmlauthor{Xin Zhang}{meta}
\icmlauthor{Xiaoxiao Li}{eceubc,vector}
\end{icmlauthorlist}

\icmlaffiliation{eceubc}{Department of Electrical and Computer Engineering, The University of British Columbia, Canada}
\icmlaffiliation{vector}{Vector Institute, Canada}
\icmlaffiliation{meta}{Meta, U.S.A.}
\icmlaffiliation{iit}{Indian Institute of Technology Hyderabad, India (Work was done during internship at UBC)}

\icmlcorrespondingauthor{Xiaoxiao Li}{xiaoxiao.li@ece.ubc.ca}


\vskip 0.3in
]



\printAffiliationsAndNotice{}  

\begin{abstract}
Conventional Federated Learning (FL) involves collaborative training of a global model while maintaining user data privacy. One of its branches, decentralized FL, is a serverless network that allows clients to own and optimize different local models separately, which results in saving management and communication resources. Despite the promising advancements in decentralized FL, it may reduce model generalizability due to lacking a global model. In this scenario, managing data and model heterogeneity among clients becomes a crucial problem, which poses a unique challenge that must be overcome: \emph{How can every client's local model learn generalizable representation in a decentralized manner?}
To address this challenge, we propose a novel \textbf{De}centralized FL technique by introducing \textbf{S}ynthetic \textbf{A}nchors, dubbed as \textsc{DeSA}. Based on the theory of domain adaptation and Knowledge Distillation (KD), we theoretically and empirically show that synthesizing global anchors based on raw data distribution facilitates mutual knowledge transfer. We further design two effective regularization terms for local training: \emph{1) REG loss} that regularizes the distribution of the client's latent embedding with the anchors and \emph{2) KD loss} that enables clients to learn from others.
Through extensive experiments on diverse client data distributions, we showcase the effectiveness of \textsc{DeSA} in enhancing both inter- and intra-domain accuracy of each client. 
The implementation of \textsc{DeSA} can be found at: \url{https://github.com/ubc-tea/DESA}

\end{abstract}

\section{Introduction}

Federated learning (FL) has emerged as an important paradigm to perform machine learning from multi-source data in a distributed manner.
Conventional FL techniques leverage a large number of clients to process a global model learning, which is coordinated by a central server.
However, there arises concerns on increased vulnerability of system failures and trustworthiness concerns for the central server design in the conventional FL. An emerging paradigm, called \textbf{decentralized FL}, is featured by its serverless setting to address the issues.  Recent work has shown decentralized FL framework can provide more flexibility and solubility~\cite{beltran2023decentralized, yuan2023decentralized}, where they relax the use of central server for model aggregation. However, this could deflect the generalization capability of each client model. 
Although most of the works in decentralized FL focus on model personalization~\cite{huang2022learn}, we consider it crucial for decentralized FL to be generalizable since local training data may not align with local testing data in practice.

Client heterogeneity is a common phenomenon in FL that can deteriorate model generalizability.
On one hand, \textbf{data-heterogeneity} relaxes the assumption that the data across all the client are independent and identically distributed (i.i.d.). To solve the problem, a plethora of methods have been proposed. However, most of them assumes that the model architectures are invariant across clients \cite{li2019convergence,li2020federated,li2021model,karimireddy2020scaffold,tang2022virtual}. On the other hand, many practical FL applications (\eg{} Internet-of-Things and mobile device system) face \textbf{model-heterogeneity}, where clients have devices with different computation capabilities and memory constraints.
In conventional FL, strategies have been proposed to leverage knowledge transferring to address the model heterogeneity issue, \eg{} server collects labeled data with the similar distribution as the client data or clients transmit models~\cite{lin2020ensemble,zhu2021data}. However, these operations usually require a server to coordinate the knowledge distillation and assume global data is available~\cite{li2019fedmd, lin2020ensemble, tan2022fedproto}. Thus, they are not applicable to decentralized FL and may not fit into real-world scenarios. Recently, there are works proposing to perform test-time adaptation for out-of-federation clients~\cite{jiang2023iop}, while this paper focuses on the solution during FL training time. 

We can see that both heterogeneous FL and decentralized FL leave the gray space of the following practical research question:
\textit{How can \textbf{every} client model perform well on other client domains, in a completely decentralized and heterogeneous FL setup?} Such a problem is referred as \underline{\textit{decentralized federated mutual learning}}, which is further detailed in Section~\ref{sec:mutuallearning}. To the best of our knowledge, we are the first to address both data and model heterogeneity issues under serverless decentralized FL setting (see the comparison with related work in Table~\ref{tab:setting}).

\begin{figure}[t]
    \centering
    \includegraphics[width=\linewidth]{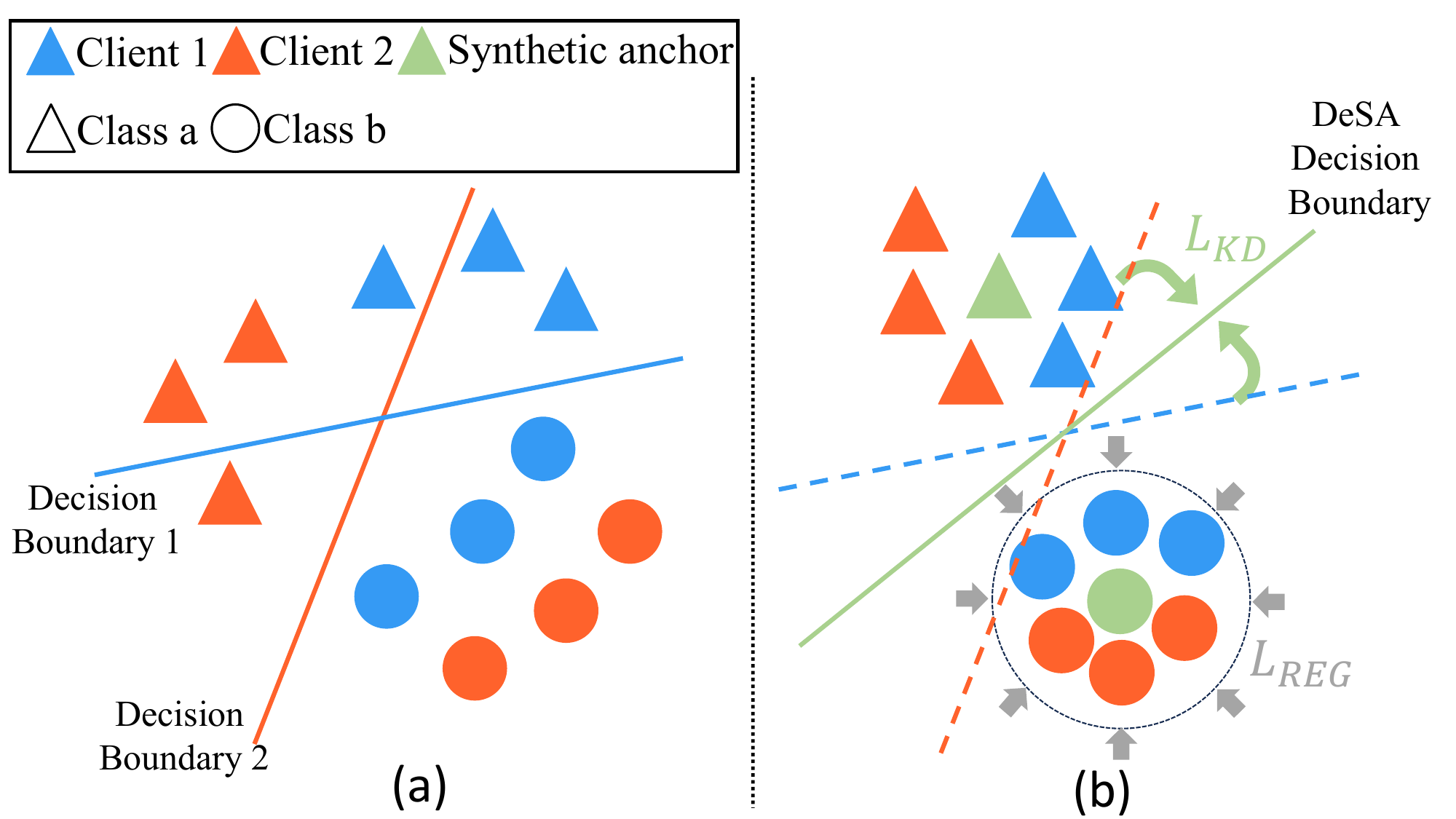}
    \caption{\small The decision boundary before (a) and after (b) applying our proposed $\mathcal{L}_{\rm REG}$ (Eq.~\ref{eq:lsgd}) and $\mathcal{L}_{\rm KD}$ (Eq.~\ref{eq:kd}) using our synthetic anchor data. $\mathcal{L}_{\rm REG}$ aims to group the raw feature towards synthetic anchor feature, and $\mathcal{L}_{\rm KD}$ twists the local decision boundary towards the generalized decision boundary.}
    \label{fig:loss_effect}
    \vspace{-5mm}
\end{figure}

In this work, we tackle the research question by performing local heterogeneity harmonized training and knowledge distillation. In particular, we synthesize a lightweight synthetic data generation process via distribution matching~\cite{zhao2023dataset}, and the synthetic data are exchangeable across clients to augment local datasets. We theoretically and empirically show that the synthetic data can serve as \textbf{anchor points} to improves FL for two purposes: 1)
reducing the domain-gap between the distributions in the latent feature space; and 2) enabling global knowledge distillation for mutual learning. The effect of the two losses are visualized in Figure~\ref{fig:loss_effect}. In summary, we tackle a realistic and challenging setting in decentralized FL, where both data and model heterogeneities exist, without acquiring publicly available global real data. Our contributions are listed as follows:
\vspace{-2mm}
\begin{list}{\labelitemi}{\leftmargin=1em \itemindent=0em \itemsep=.2em}
  \setlength{\parskip}{0pt}
  \setlength{\parsep}{0pt}
\item To circumvent the heterogeneity on data and model, we propose an innovative algorithm named Decentralized Federated Learning with Synthetic Anchors (\ours{}) that utilizes only a small number of synthetic data. 
\item We theoretically and empirically show that the strategic design of synthetic anchor data and our novel FL loss function effectively boost local model generalization in diverse data scenarios.
\item We conduct extensive experiments prove \ours{}'s effectiveness, surpassing existing decentralized FL algorithms. It excels in inter- and intra-client performance across diverse tasks.
\end{list}
\vspace{-2mm}
\section{Preliminaries}

\subsection{Conventional Federated Learning}
\label{subsec: Conventional FL}


Conventional FL aims to learn a \textbf{single generalized global model} that performs optimally on all the clients' data domains. 
Mathematically, the learning problem can be formulated as 
\begin{equation}
    \label{Horizfl}
    M^* = \arg\min_{M\in \mathcal{M}} \sum_{i = 1}^{N} \mathbb{E}_{\x, y \sim P_i} [ \mathcal{L}(M(\x), y) ]
\end{equation}
where $M^*$ is the optimized global model from the shared model space $\mathcal{M}$, $P_i$ is the local data distribution at the $i$th client, $\mathcal{L}$ is the loss function and $(\x, y)$ is the feature-label pair.
Inspired by the pioneering work of FedAvg~\cite{mcmahan2017communication}, a plethora of methods have tried to fill in the performance gap of FedAvg on data-heterogeneous scenario, which can be categorized in two main orthogonal directions:  \emph{Direction 1} aims to minimize the difference between the local and global model parameters to improve convergence~\cite{li2020federated,karimireddy2020scaffold,wang2020tackling}. \emph{Direction 2} enforces consistency in local embedded features using anchors and regularization loss~\cite{tang2022virtual,zhou2022fedfa,ye2022fedfm}. 
This work follows the second research direction and aim to leverage anchor points to handle data heterogeneity. We also tackle the more challenging problem of domain shift, unlike other methods that only assume a label-shift amongst the client-data distributions.

\begin{table}[t]

\centering
\caption{\small Comparison of the settings with other related heterogeneous FL and decentralized FL methods.}
\label{tab:setting}
\begin{adjustbox}{max width=\linewidth}
\begin{threeparttable}
\begin{tabular}{c|c|c|c|c}
\toprule
Methods & \begin{tabular}[c]{@{}l@{}}Data Hetero-\\geneity\end{tabular} & \begin{tabular}[c]{@{}l@{}}Model Hetero-\\geneity\end{tabular} & Serverless & \begin{tabular}[c]{@{}l@{}}No Public\\ Data\end{tabular}
\\
\hline 
VHL~\cite{tang2022virtual}\tnote{a}  & \textcolor{red}{\cmark} & \textcolor{blue}{\xmark} & \textcolor{blue}{\xmark} & \textcolor{red}{\cmark}
\\
\hline 
FedGen~\cite{zhu2021data}  & \textcolor{red}{\cmark} & \textcolor{blue}{\xmark} & \textcolor{blue}{\xmark} & \textcolor{red}{\cmark}
\\
\hline
FedHe~\cite{chan2021fedhe} & \textcolor{blue}{\xmark} & \textcolor{red}{\cmark} & \textcolor{red}{\cmark} & \textcolor{red}{\cmark}
\\
\hline
FedDF~\cite{lin2020ensemble} & \textcolor{red}{\cmark} & \textcolor{red}{\cmark}  & \textcolor{blue}{\xmark} & \textcolor{blue}{\xmark}
\\
\hline
FCCL~\cite{huang2022learn}  & \textcolor{red}{\cmark} & \textcolor{red}{\cmark} & \textcolor{blue}{\xmark} & \textcolor{blue}{\xmark}
\\
\hline 
FedProto~\cite{tan2022fedproto}  & \textcolor{red}{\cmark} & \textcolor{red}{\cmark} & \textcolor{blue}{\xmark} & \textcolor{red}{\cmark}
\\
\hline 
FedFTG~\cite{zhang2022fine}  & \textcolor{red}{\cmark} & \textcolor{red}{\cmark} & \textcolor{blue}{\xmark} & \textcolor{red}{\cmark}
\\
\hline 
DENSE~\cite{zhang2022dense}  & \textcolor{red}{\cmark} & \textcolor{red}{\cmark} & \textcolor{blue}{\xmark} & \textcolor{red}{\cmark}
\\
\hline
\ours{} (ours) & \textcolor{red}{\textcolor{red}{\cmark}} & \textcolor{red}{\cmark} & \textcolor{red}{\cmark} & \textcolor{red}{\cmark} 
\\
\bottomrule
\end{tabular}%
\begin{tablenotes}
\small
    \item[a] VHL has a single global model, trained using mutual information from all clients. Therefore we reference it under Mutual Learning. 
\end{tablenotes}
\end{threeparttable}
\end{adjustbox}
\vspace{-5mm}
\end{table}

\subsection{Decentralized FL and Mutual Learning}
\label{sec:mutuallearning}
Standard decentralized FL aims to solve the similar generalization objective as conventional FL (\ie{} Eq.~\ref{Horizfl}), only, without a central server to do so~\cite{gao2022survey}, and the objective applies to each local models $M_i$. Here, we focus on learning from each other under heterogeneous models and data distributions. This brings in an essential line of works, known as \emph{mutual learning}, where clients learn from each other to obtain the essential generalizability for their models. For example, during quarantine for pandemic, hospitals want to collaboratively train a model for classifying the virus. It is desired the models are generalizable to virus variants from different area, so that after quarantine the local models are still effective for incoming tourists. 

Although mutual learning with heterogeneous data and models has been studied recently, most of them assume the existence of public real data~\cite{lin2020ensemble,huang2022learn,gong2022preserving} or a central server to coordinate the generation of synthetic data from the local client data~\cite{zhang2022dense,zhu2021data,zhang2022fine}. 
Another line of works rely on a server to aggregate locally generated logits or prototypes, and use it as local training guidance~\cite{jeong2018federated,chan2021fedhe,tan2022fedproto}. In addition, more recent works have suggested that each clients train two models, a larger model for local training and a smaller model for mutual information exchange~\cite{wu2022communication,shen2023federated}. However, none of the above methods simultaneously address both non-iid data and heterogeneous models under serverless and data-free setting. In this work, we explore mutual learning to optimize both local (intra-client) and global (inter-client) dataset accuracy (see the detailed setup in Sec.~\ref{sec:setting}).
We list the comparison with other methods in Table~\ref{tab:setting} and
more detailed related works in Appendix~\ref{app:related}.

\section{Method}

\subsection{Notation and Problem Setup}
\label{sec:setting}
Suppose there are $N$ clients with $i$th client denoted as $C_i$. 
Let's represent the \textbf{private datasets} on $C_i$ as $D_i=\{\x,y\}$,
where $\x$ is the feature and $y\in\{1,\cdots, K\}$ is the label from $K$ classes. Let $\mathcal{L}$ represent a real-valued loss function for classification (\eg cross-entropy loss).
Denote the communication neighboring nodes of the client $C_i$ in the system as $\mathcal{N}(C_i)$ and the local models as $\{ M_i  = \rho_i \circ \psi_i \}_{i = 1}^{i = N}$, where $\psi_i$ represents the feature encoder and $\rho_i$ represents the classification head for the $i$th client's model $M_i$. 
\ours{} returns trained client models $\{ M_i \}_{i = 1}^{i = N}$. 

Our work aims to connect two key areas,  heterogeneous FL and decentralized FL, termed as \underline{\textit{decentralized federated mutual learning}}, where we train \textbf{multiple client models} in a decentralized way such that they can \textbf{generalize} well across all clients' data domains. Mathematically, our objective is formulated as, for every client $i$,
\begin{align}\label{eq:fmd}
      M_i^* = & \argmin_{M_i \in \mathcal{M}_i}
    \underbrace{
    \ \mathbb{E}_{\x, y \sim P_i} [ \mathcal{L}(M_i(\x), y)]}_{\text{Intra-client}} \notag \\ &
    + \underbrace{\sum_{ j \in \mathcal{N}(C_i)} \mathbb{E}_{\x, y \sim P_j} [ \mathcal{L}(M_i(\x), y)]}_{\text{Inter-client}},
\end{align}
where $M_i^*$ is the best possible model for client $i$ with respect to the model space $\mathcal{M}_i$.


\begin{figure*}[t]
\centering
\subfloat[Heterogeneous Setup]{  
    \includegraphics[width=0.37\linewidth]{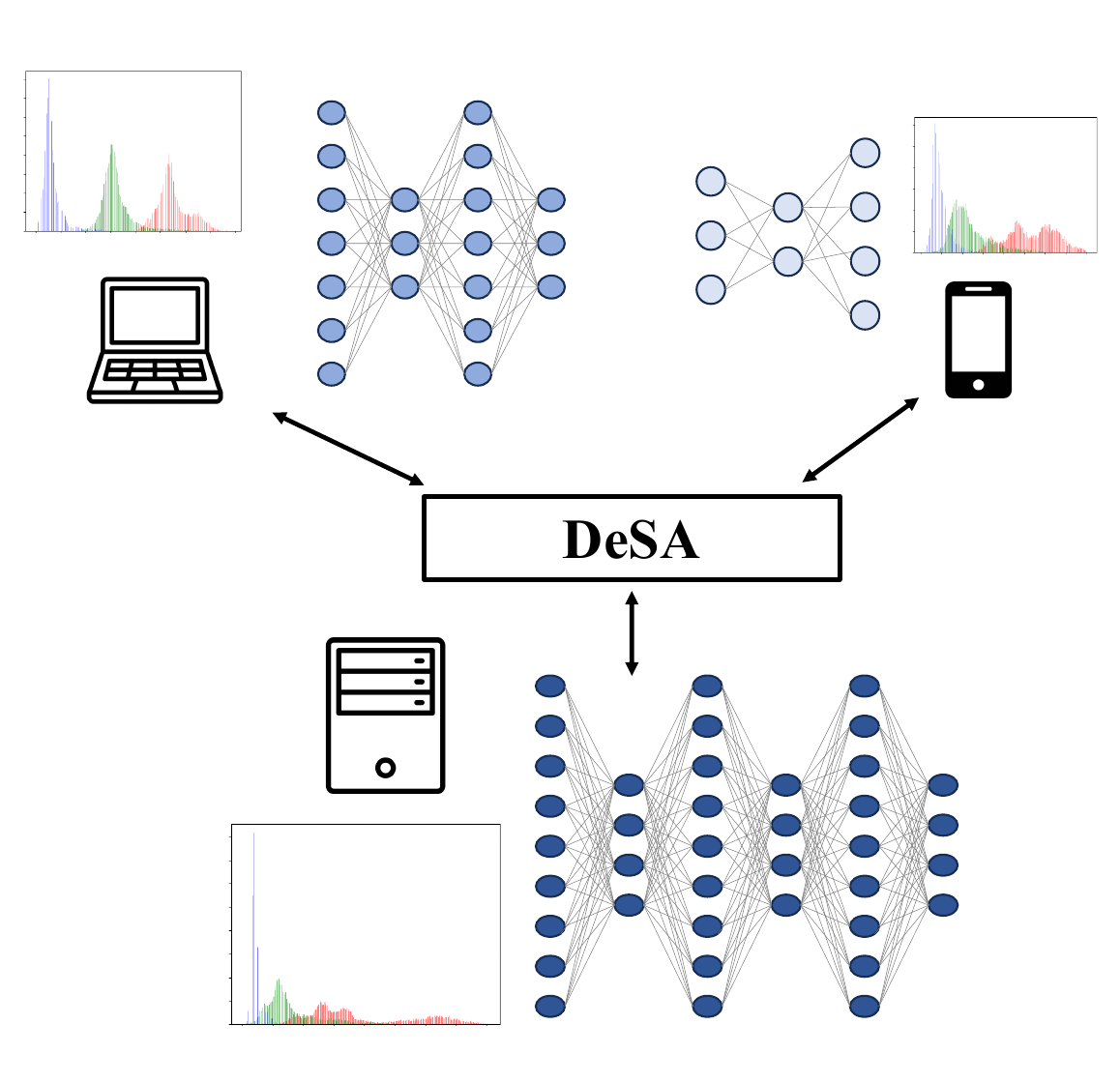}}
\subfloat[\ours{} pipeline]{
    \includegraphics[width=0.63\linewidth]{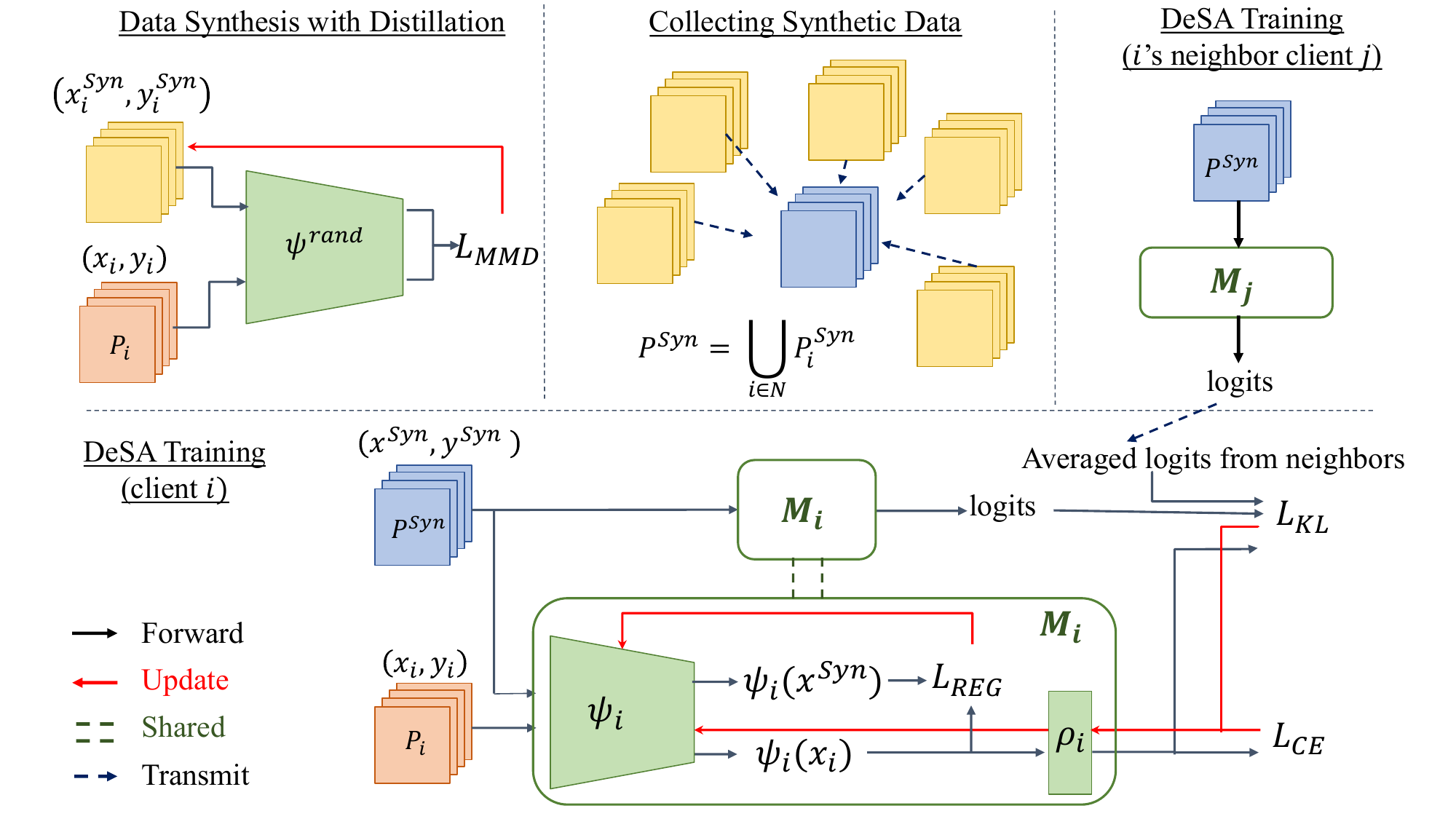}}
\caption{\small Heterogeneous setup and \ours{} pipeline. (a) We assume a realistic FL scenario, where clients have different data distributions and computational powers, which results in different model architectures. (b) \ours{} pipeline consists of three phases, local data synthesis (top left) , global synthetic data aggregation (top right)(Section ~\ref{sec:data_generation}), and decentralized training (bottom) using anchor regularization(Section ~\ref{sec:reg_loss})  and knowledge distillation (Section ~\ref{sec:kd_loss}).
}
\label{fig:main}
\vspace{-5mm}
\end{figure*}

\noindent \textbf{Overview of \ours{}.} The overall objective of \ours{} is to improve local models' generalizability in FL training under both model and data heterogeneity in a serverless setting as shown in in Figure~\ref{fig:main}(a). 
The pipeline of \ours{} is depicted in Figure~\ref{fig:main}(b). Our algorithm contains three important aspects: 1) we generate synthetic anchor data by matching raw data distribution and share them amongst the client's neighbors; 2) we train each client model locally with a synthetic anchor-based feature regularizer; and 3) we allow the models to learn from each other
via knowledge distillation based on the synthetic anchors.
The effectiveness of steps 2 and 3 can be observed in Figure~\ref{fig:loss_effect}.
The next three subsections delve deeper into these three designs. The full algorithm is depicted in Algorithm~\ref{alg:iterativeneigh}.

\subsection{Synthetic Anchor Datasets Generation}\label{sec:data_generation}
The recent success of dataset distillation-based data synthesis technique that generates data with similar representation power as the original raw data~\cite{zhao2020dataset,zhao2023dataset}. Thus, we propose to leverage this method to efficiently and effectively generate a synthetic anchor dataset without requiring any additional model pretraining. 
Inspired by our theoretical analysis in Sec.~\ref{sec:theory}, we utilize distribution matching~\cite{zhao2023dataset} to distill local synthetic anchor data using the empirical maximum mean discrepancy loss (MMD)~\cite{gretton2012kernel} as follows,
\begin{align}\label{eq:MMD}
D^{Syn}_i = \argmin_{D} ||& \frac{1}{|D_i|}\sum_{(\x,y) \in D_i} \psi^{\rm rand}(\x|y)\! 
\notag \\ &-\! \frac{1}{|D|}\sum_{(\x,y) \in D}\psi^{\rm rand}(\x|y)||^2,
\end{align} 
where $\psi^{\rm rand}$ is a randomly sampled feature extractor for each iteration, $D_i$ is the raw data for client $i$, and $D^{Syn}_i$ is its target synthetic data. Following Eq.~\ref{eq:MMD}, we can manipulate the synthetic anchor dataset generation in a class-balanced manner, which enables the label prior to being unbiased towards a set of classes.

Similar to other FL work sharing distilled synthetic data for efficiency~\cite{song2023federated}, 
After local data synthesis, we request each client to share it among peers to ensure they possess same global information, and
the global synthetic anchor data is denoted as $D^{Syn} = \cup_i D^{Syn}_i$~\footnote{By default, we perform simple interpolation (averaging) among clients as it is shown that using this mixup strategy can improve model fairness~\cite{chuang2021fair}.}. As shown in Algorithm~\ref{alg:iterativeneigh}, our method is designed to work only to receive neighbor node information, and DeSA is designed for peer-to-peer decentralized network. Since the loss requires all nodes’ information, we can leverage the FastMix algorithm to aggregate all nodes’ information as in \cite{ye2020decentralized,luo2022decentralized}. This method can aggregate all nodes' information via adjacent nodes’ communication at a linear speed. It is very common in fully decentralized optimization. In fact, our method can also work if each node can only receive neighbor nodes’ information during training, and we empirically show the feasibility in our CIFAR10C experiments by sampling neighboring clients. 

It is worth noting that, different from~\cite{song2023federated}, we further propose novel loss terms and training strategies to help mitigate the distribution discrepancy between the clients, which are detailed in the following sections (see Sec.~\ref{sec:reg_loss} and Sec.~\ref{sec:kd_loss}), enabling improved model performance, as intuitively incorporating $D^{Syn}$ into training can only achieve sub-optimal results (see Figure~\ref{fig:ablation_office} when both $\lambda_{KD}$ and $\lambda_{REG}$ equals to 0.). 

\subsection{REG Loss for Feature Regularization}\label{sec:reg_loss}
The synthetic anchor regularization loss enforces the model to learn a \textbf{client-domain invariant} representation of the data. 
\cite{tang2022virtual} and  other domain incremental works \cite{rostami2021lifelong} show that, adding a distribution discrepancy based loss in the \textbf{latent space} enables learning of a domain-invariant encoder $\psi$. However, most of the domain adaptation works require explicit access to the real data from other domains, or generates random noise as anchors. We propose using the latent space distribution of the synthetic anchor data $D^{Syn}$ as a synthetic anchor to which the client-model specific encoders $\psi_i$ can project their \textbf{local private data} onto. The loss function can be therefore defined as,
\begin{align}
    \label{eq:lsgd}
    \mathcal{L}_{REG}(\psi_i) =
    \mathbb{E}[d(\psi_i(D^{Syn})|| \psi_i(D_i))],
\end{align}
where $K$ is the number of classes, $d$ is distance computed using the supervised contrastive loss,
\begin{align}
    \label{eq:con}
    &d(\psi_i;D^{Syn}, D_i) = \sum_{j \in B}-\frac{1}{|B^{y_j}_{\backslash j}|} 
    \notag \\ &\sum_{\x_p\in B^{y_j}_{\backslash j}} {\rm log} \frac{{\rm exp}(\psi_i(\x_j) \cdot \psi_i(\x_p) / \tau_{temp})}{\sum_{\x_a \in B_{\backslash j}} {\rm exp}(\psi_i(\x_j) \cdot \psi_i(\x_a) / \tau_{temp})}
\end{align}
where $B_{\backslash j}$ represents a batch containing both local raw data $ D_i$ and global synthetic data $D^{Syn}$ but without data $j$, $B^{y_j}_{\backslash j}$ is a subset of $B_{\backslash j}$ only with samples belonging to class $y_{j}$, 
and $\tau_{temp}$ is a scalar temperature parameter. Note that we will detach the synthetic anchor data to ensure we are pulling local features to global features.

\subsection{Knowledge Distillation for Information Exchange}\label{sec:kd_loss}
This step allows a single client model to learn from all the other models using a common synthetic anchor dataset $D^{Syn}$. Under our setting of model heterogeneity among clients, we cannot aggregate the model parameters by simply averaging as in FedAvg~\cite{mcmahan2017communication}. Instead, we propose to utilize knowledge distillation (KD)~\cite{hinton2015distilling} for decentralized model aggregation. 
Specifically, the fact that $D^{Syn}$ is representative of the joint distributions of the clients allows it to be an ideal dataset for knowledge transfer. Thus, to enable the client model to mimic the predictions of the other models, we also incorporate KD loss using $D^{Syn}$, formulated as

\begin{align}
\label{eq:kd}
          &\mathcal{L}_{KD}(M_i) = \mathcal{L}_{KL} (M_i (\x^{Syn}) , \bar{Z}_i), 
          \notag \\ &\quad  \bar{Z}_i = \frac{1}{|N(C_i)|} \sum_{j \in N(C_i)} M_j (\x^{Syn}),
\end{align}
where $(\x^{Syn}, y^{Syn}) \sim D^{Syn}$, $N(i)$ is the neighbor clients of client $C_i$, and $\mathcal{L}_{KL}$ is the KL-Divergence between the output logits of $\x^{Syn}$ on $M_i$ and the averaged output logits of $\x^{Syn}$ on $M_j, \forall j \in N(C_i)$.
Finally, we formulate our objective function as
\begin{align}
\label{eq:glob}
          \mathcal{L} & = \mathcal{L}_{CE}(D_i \cup D^{Syn}; M_i) + \lambda_{REG} \mathcal{L}_{REG}(D_i, D^{Syn}; M_i) 
          \notag \\&
          + \lambda_{KD}\mathcal{L_{\rm KD}}(D^{Syn}; M_i, \bar{Z_i }),
\end{align}
where $\mathcal{L}_{CE}(D;M)$
is the $K$-classes  cross entropy loss on data $D$ and model $M$.
$\lambda_{REG}$ and $\lambda_{KD}$ are the hyperparameters for regularization and KD losses, $\mathcal{L}_{REG}$ and $\mathcal{L}_{KD}$ are as defined in Eq.~\ref{eq:lsgd} and Eq.~\ref{eq:kd}, and $\bar{Z_i}$ is the shared logits from neighboring clients $N(C_i)$.
We also incorporate our class-conditional-generated global synthetic data $D^{Syn}$ in the CE loss to enforce models to perform well on general domains and to benefit from the augmented dataset.
Overall, we formulate our objective function as

\begin{algorithm}[]
    \caption{ Serverless \ours{} (Procedures for Client $i$)}
    \label{alg:iterativeneigh}

    \begin{algorithmic}[1]

            
            

    \Procedure{Init}{$C_i$}
        \ForAll{$j \in \mathcal{N}(C_i)$}   
            \State $ D^{Syn} = D^{Syn} \cup \Call {Get-img}{C_j}$  
        \EndFor
    \EndProcedure{}
    
    \Procedure{LocalTrain}{$C_i$, $t$}
        \If {client $C_i$ is sampled} 
        \State share $Z_i = M_i (D^{Syn})$ to $\mathcal{N}(C_i)$     
        \State get $\bar{Z}_i = {1}/{|\mathcal{N}(C_i)|}\sum_j^{j \in \mathcal{N}(C_i)}Z_j$ 

            \State $\mathcal{L}_{CE}$ = \Call{classification}{$D_{i} \cup D^{Syn}; M_{i}$} 
        
            \State $\mathcal{L}_{REG}$ = \Call{Feature-REG}{$D_{i}, D^{Syn}; M_i$} 
            \State $\mathcal{L}_{KD}$ = \Call{KD}{$D^{Syn}; M_i, \bar{Z}$} 
            
            \State $\mathcal{L}$ = $\mathcal{L}_{CE} + \lambda_{REG}\mathcal{L}_{REG} + \lambda_{KD}\mathcal{L}_{KD}$
            \State $M_i = M_i  - \eta \nabla_{M_i} \mathcal{L}$ 
         \EndIf   
    \EndProcedure{}
    \end{algorithmic}

\end{algorithm}
\vspace{-6mm}  
\section{Theoretical Analysis}\label{sec:theory}
In this section, we focus on providing a theoretical justification for our algorithm.  The technical challenge is to analyze the effect of minimizing the overall loss function (Eq.~\ref{eq:glob}) on the generalizability on the global data distribution $P^T$ of a client model.\\
\textbf{Notation:} Here, we state the intuitive definitions borrowed from \cite{ben2010theory}. For the precise definitions please refer to the notation table in Appendix \ref{app:proof}. The domain pair  $(P, f^P)$ represents the source distribution and its optimal labeling function. The $\epsilon$ - error of a hypothesis $M$ on the domain pair $(P, f^P)$ is defined as the probability of a mismatch between the optimal labeling function $f^P$ and the hypothesis $M$. Additionally, the $\mathcal{H}\Delta\mathcal{H}$ divergence ($d_{\mathcal{H}\Delta \mathcal{H}}$) describes a distance measure between two distributions. 
\\
\textbf{Analysis:}
Our analysis focuses on the generalization on a global data distribution $P^T$ that is the average of the client distributions~\cite{marfoq2021federated}, with labeling function $f^T$ the same as $f^S$ .
We assume that the minimization of loss in Eq (\ref{eq:fmd}) matches the optimal labeling function $f^T$ of the global distribution $P^T$, and formalizes the intuition of generalizing over closely related client distributions. 

We proceed by defining the distribution of our global synthetic data as $P^{Syn}$ with the corresponding labeling function $f^{Syn}$.
As $P^{Syn}$ is also leveraged for the knowledge distillation, inspired by \cite{feng2021kd3a} we describe $(P_{KD}^{Syn}, f^{Syn}_{KD})$ as follows.

\begin{definition}\label{Def: Extended KD}
      The extended knowledge distillation (KD) domain pair ($P^{Syn}_{KD}$,$f^{Syn}_{KD}$) of a client $C_i$, originating from the KD dataset $D_{KD}^{Syn}$ is defined as $$D_{KD}^{Syn} \! = \! \{\x^{Syn}, \! \frac{1}{|N(C_i)|}\sum_{j \in N(C_i)}\!M_j(\x^{Syn})\} \!
      \sim \! (P_{KD}^{Syn}, f^{Syn}_{KD}) $$
      where $M_i(\x^{Syn})$ is the predicted logit on global synthetic data $\x^{Syn} \sim P^{Syn}(x)$. 
\end{definition}





\begin{definition}
    We define the the overall source distribution of the client $C_i$ as  $P_i^S$, which is a convex combination of the local and synthetic distributions
    \begin{align}
        P_i^S = \alpha P_i + \alpha^{Syn}P^{Syn} + \alpha^{Syn}_{KD} P^{Syn}_{KD} \label{Eq:Hull}
    \end{align}
\end{definition}
The positive component weights $\{\alpha, \alpha^{Syn}, \alpha^{Syn}_{KD}\}$ describe the dependence of the client $C_i$, on the local, synthetic and knowledge distillation data, and $\alpha + \alpha^{Syn}  + \alpha^{Syn}_{KD} = 1$. 
\begin{thm}\label{Thm:general_bound}
    Denote the client $C_i$'s model as $M_i = \rho_i \circ \psi_i \in {\mathcal{\boldsymbol{P}}}_i \circ \Psi_i = \mathcal{M}_i$ and its overall source distribution as $P_i^S$ with component weights ($\boldsymbol{\alpha}$). Then the generalization error on the global data distribution $P^T$ can be bounded as follows
    \begin{align}\label{eq:general_bound}
    \epsilon_{{P}^T}(M_i) &\le 
    \epsilon_{{P}_i^S}(M_i) + \alpha \mathbf{C}(P_i, P_T) 
    \notag \\ &+ \alpha^{Syn} \epsilon_{{P}^T}(f^{Syn})  + \alpha^{Syn}_{KD} \epsilon_{{P}^T}(f_{KD}^{Syn})  
    \notag \\ &+ \frac{(1 - \alpha)}{2} d_{\mathcal{P}_i \Delta \mathcal{P}_i}(\psi \circ P^{Syn}, \psi \circ P^T)
    \end{align}
where $\mathbf{C}(P_i, P_T)$ are small distance terms depending on the distributions $P_i$ and $P_T$.
\end{thm}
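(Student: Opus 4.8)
The plan is to bound the target error $\epsilon_{P^T}(M_i)$ by splitting it across the three components of the source mixture $P_i^S$ and then invoking the domain-adaptation machinery of \cite{ben2010theory} separately on each component. Since the component weights satisfy $\alpha + \alpha^{Syn} + \alpha^{Syn}_{KD} = 1$, I would first write
\begin{equation}
\epsilon_{P^T}(M_i) = \alpha\,\epsilon_{P^T}(M_i) + \alpha^{Syn}\,\epsilon_{P^T}(M_i) + \alpha^{Syn}_{KD}\,\epsilon_{P^T}(M_i),
\end{equation}
and bound each of the three copies by transferring from a different source domain. The key structural observation I would exploit is that, by Definition~\ref{Def: Extended KD}, the KD domain $P^{Syn}_{KD}$ shares the same $x$-marginal as $P^{Syn}$ (only its labeling function $f^{Syn}_{KD}$ differs), so the two synthetic pieces will generate the \emph{same} feature-space divergence.

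For the local copy, I would apply the standard Ben-David bound between $P_i$ and $P^T$, giving $\epsilon_{P^T}(M_i) \le \epsilon_{P_i}(M_i) + \mathbf{C}(P_i, P_T)$, where $\mathbf{C}(P_i,P_T)$ collects the $\mathcal{H}\Delta\mathcal{H}$ divergence and the joint-optimal (``$\lambda$'') error; these are the terms the statement declares small because $P^T$ is an average of the client distributions and hence close to $P_i$. For the synthetic copy I would use the triangle inequality of the error pseudometric together with the assumption $f^T = f^S$: writing $\epsilon_{P^T}(M_i) \le \epsilon_{P^T}(M_i, f^{Syn}) + \epsilon_{P^T}(f^{Syn})$ isolates the labeling-discrepancy term $\epsilon_{P^T}(f^{Syn})$, and then transferring $\epsilon_{P^T}(M_i, f^{Syn})$ from $P^T$ to $P^{Syn}$ in the \emph{latent space} --- measuring the disagreement with the classifier-head class $\mathcal{P}_i$ on the push-forwards $\psi\circ P^{Syn}$ and $\psi\circ P^T$ --- contributes $\epsilon_{P^{Syn}}(M_i)$ plus $\tfrac12 d_{\mathcal{P}_i\Delta\mathcal{P}_i}(\psi\circ P^{Syn}, \psi\circ P^T)$. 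The KD copy is handled identically with $f^{Syn}_{KD}$ in place of $f^{Syn}$, reusing the same latent divergence because the $x$-marginals coincide; this is where I would draw on the KD-specific formulation of \cite{feng2021kd3a}.

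Finally I would recombine. Multiplying the three bounds by their weights and summing, the source-error terms $\alpha\,\epsilon_{P_i}(M_i) + \alpha^{Syn}\epsilon_{P^{Syn}}(M_i) + \alpha^{Syn}_{KD}\epsilon_{P^{Syn}_{KD}}(M_i)$ collapse, by linearity of the expected error over the mixture, into $\epsilon_{P_i^S}(M_i)$; the two latent divergence contributions combine with coefficient $\tfrac{\alpha^{Syn}+\alpha^{Syn}_{KD}}{2} = \tfrac{1-\alpha}{2}$; and the labeling-discrepancy and local terms appear with their stated weights, yielding exactly \eqref{eq:general_bound}.

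The main obstacle I expect is the latent-space transfer step. The classical $\mathcal{H}\Delta\mathcal{H}$ transfer lemma bounds $|\epsilon_{P^{Syn}}(h,h') - \epsilon_{P^T}(h,h')|$ only when \emph{both} arguments are hypotheses in the class; here one argument is the fixed labeling function $f^{Syn}$ (resp.\ $f^{Syn}_{KD}$), so I must justify that it is realizable --- or approximately so --- by some head in $\mathcal{P}_i$ acting on the shared encoder $\psi$, and that the encoder is held fixed across the two distributions being compared so that the push-forwards $\psi\circ P^{Syn}$ and $\psi\circ P^T$ and the divergence $d_{\mathcal{P}_i\Delta\mathcal{P}_i}$ are well defined. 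Making this realizability/decomposition precise, and confirming that the factor of $\tfrac12$ and the direction of the inequality survive the move from input space to feature space, is the delicate part of the argument.
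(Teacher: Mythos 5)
Your proposal follows essentially the same route as the paper's proof: a weighted combination of the Ben-David bound on the local domain with a latent-space transfer lemma applied to $P^{Syn}$ and $P^{Syn}_{KD}$ (whose shared $x$-marginal makes the two divergence terms coincide), recombined via linearity of the mixture error into $\epsilon_{P_i^S}(M_i)$, with the coefficient $\tfrac{1-\alpha}{2}$ arising exactly as you describe. The realizability issue you flag in the latent-space transfer step --- replacing the fixed labeling function $f^{Syn}$ by an element of $\mathcal{P}_i$ before taking the supremum --- is present but glossed over in the paper's own proof as well (it writes $f^{Syn}\circ\psi^{-1}$ and then bounds by $\sup_{\rho,\rho'}$), so your concern is legitimate but does not mark a divergence from the paper's argument.
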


Below we give a short summary of the interpretation of our theorem in \ref{remark}.
For a more detailed interpretation, please refer to Appendix \ref{app:proof}.
\begin{remark} \label{remark} The first term is minimized via local cross-entropy loss. The second term is a constant given data distributions. The third and fourth terms measure the discrepancy between the labeling function of the target domain $f_T$ and the labeling function $f^{Syn}$ or $f^{Syn}_{KD}$ of the corresponding synthetic distribution $P^{Syn}$ or $P^{Syn}_{KD}$. With proper dataset distillation~\cite{zhao2023dataset},  we have the model trained on $D^{Syn}$ similar to that trained by $D^T$, \ie{}
    $\mathbb{E}_{\x \sim P^T} [l(M^{T}(\x),y)] \simeq \mathbb{E}_{\x \sim P^{Syn}} [l(M^{Syn}(\x),y)],$
implying $f^{Syn} \rightarrow f^T$ and then we have a small $\epsilon_{{P}^T}(f^{Syn})$. A small $\epsilon_{{P}^T}(f^{Syn}_{KD})$ can be achieved when every client achieves low $\mathcal{L}_{CE}$, indicating the model ability to learn $f_{KD}^{Syn}$ approximating to $f^T$. The last term $d_{\mathcal{P}_i \Delta \mathcal{P}_i}$  motivates the need for reducing our domain-invariant regularizer in Eq.~\ref{eq:lsgd}, elicited to be bounded.
\end{remark}
Furthermore, for the domain pair $(D, f^D)$, we denote $\mathcal{J}(D) = |\epsilon_D(M)  - \epsilon_{P^T}(M)| +\epsilon_{P^T}(f^D)$. The following proposition implies our generalization bound in Theorem~\ref{Thm:general_bound} is tighter than the generalization bound of training with local data in Eq.~\ref{eq:loss_on_Pi} \cite{ben2010theory} under some mild conditions. 
\begin{prop}\label{Prop:tighter_bound}
Under the conditions in Theorem~\ref{Thm:general_bound}, if it further holds that 
\begin{align}
\label{eq:tighter_bound}
\sup_{M\in\mathcal{M}_i} & \min \{ \mathcal{J}(P^{Syn}) , \mathcal{J}(P^{Syn}_{KD}) \}  \notag\\
     \le & \inf_{M\in\mathcal{M}_i}(\epsilon_{P_i}(M) - \epsilon_{{P}^T}(M)) + \mathbf{C}(P_i, P_T)
\end{align}
then we can get a tighter generalization bound on the $i$th client's model $M_i$ than learning with local data only.
\vspace{-2mm}
\end{prop}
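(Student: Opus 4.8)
The plan is to prove that, under the stated hypothesis, the upper bound furnished by Theorem~\ref{Thm:general_bound} is never larger than the purely local Ben--David bound in equation~\eqref{eq:loss_on_Pi}, uniformly over $M\in\mathcal{M}_i$ and hence at the trained model $M_i$. Observe first that setting $\alpha=1$ (so $\alpha^{Syn}=\alpha^{Syn}_{KD}=0$) in equation~\eqref{eq:general_bound} kills every non-local term and recovers exactly $\epsilon_{P_i}(M)+\mathbf{C}(P_i,P_T)$, which I take to be the local baseline $B_{\mathrm{loc}}(M)$ of equation~\eqref{eq:loss_on_Pi}. Writing $B_{\mathrm{new}}(M)$ for the right-hand side of equation~\eqref{eq:general_bound}, the goal reduces to the pointwise inequality $B_{\mathrm{new}}(M)\le B_{\mathrm{loc}}(M)$.

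First I would exploit the convex structure of the overall source distribution~\eqref{Eq:Hull}: since $\epsilon_D(M)$ is an expectation and therefore affine in $D$, we have $\epsilon_{P_i^S}(M)=\alpha\,\epsilon_{P_i}(M)+\alpha^{Syn}\epsilon_{P^{Syn}}(M)+\alpha^{Syn}_{KD}\epsilon_{P^{Syn}_{KD}}(M)$. Substituting this into $B_{\mathrm{new}}$ and using $\alpha-1=-(\alpha^{Syn}+\alpha^{Syn}_{KD})$, the desired inequality collapses, after cancelling $\alpha\,B_{\mathrm{loc}}(M)$ from both sides, to the claim that the \emph{non-local cost}
\[
\alpha^{Syn}\!\left[\epsilon_{P^{Syn}}(M)+\epsilon_{P^T}(f^{Syn})\right]+\alpha^{Syn}_{KD}\!\left[\epsilon_{P^{Syn}_{KD}}(M)+\epsilon_{P^T}(f^{Syn}_{KD})\right]+\tfrac{1-\alpha}{2}\,d
\]
is dominated by the \emph{local slack} $(1-\alpha)\,B_{\mathrm{loc}}(M)$, where $d$ abbreviates the encoded-space divergence term $d_{\mathcal{P}_i\Delta\mathcal{P}_i}(\psi\circ P^{Syn},\psi\circ P^T)$ of Theorem~\ref{Thm:general_bound}.

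Next I would bound each bracketed source term by the target error via the triangle-type inequality $\epsilon_{P^{Syn}}(M)+\epsilon_{P^T}(f^{Syn})\le \epsilon_{P^T}(M)+\mathcal{J}(P^{Syn})$ (and likewise for the KD component), which is precisely how $\mathcal{J}(\cdot)$ was defined. Routing the whole non-local mass $1-\alpha=\alpha^{Syn}+\alpha^{Syn}_{KD}$ through whichever synthetic intermediary is cheaper collapses the two contributions into $(1-\alpha)\min\{\mathcal{J}(P^{Syn}),\mathcal{J}(P^{Syn}_{KD})\}$, while the $|\epsilon_{P^{Syn}}(M)-\epsilon_{P^T}(M)|$ piece hidden in $\mathcal{J}$ absorbs the explicit term $\tfrac{1-\alpha}{2}d$ through $|\epsilon_{P^{Syn}}(M)-\epsilon_{P^T}(M)|\le\tfrac12 d$. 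After dividing by $1-\alpha>0$, the requirement becomes
\[
\min\{\mathcal{J}(P^{Syn}),\mathcal{J}(P^{Syn}_{KD})\}\le \epsilon_{P_i}(M)-\epsilon_{P^T}(M)+\mathbf{C}(P_i,P_T).
\]
Since the supremum over $M$ upper-bounds the pointwise min and the infimum over $M$ lower-bounds the pointwise slack, the single scalar hypothesis~\eqref{eq:tighter_bound} implies this pointwise inequality for every $M$, in particular at $M=M_i$, which yields the conclusion.

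The hard part will be making the ``route through the cheaper intermediary'' step rigorous. As stated, the theorem carries both labelling-function terms $\epsilon_{P^T}(f^{Syn})$ and $\epsilon_{P^T}(f^{Syn}_{KD})$ with \emph{fixed} weights, so a naive term-by-term bound produces the weighted average $\alpha^{Syn}\mathcal{J}(P^{Syn})+\alpha^{Syn}_{KD}\mathcal{J}(P^{Syn}_{KD})$, which only yields a $\max$ rather than the $\min$ demanded by equation~\eqref{eq:tighter_bound}; obtaining the $\min$ requires re-deriving the non-local portion of the bound with a \emph{single} best intermediate distribution and checking that the affine-in-$D$ decomposition survives when the component labelling functions $f^{Syn}$ and $f^{Syn}_{KD}$ differ. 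A secondary subtlety is attributing the explicit divergence term to the $|\epsilon_{P^{Syn}}-\epsilon_{P^T}|$ part of $\mathcal{J}$ without double counting, and confirming the $\sup/\inf$ quantifiers indeed deliver a model-uniform comparison.
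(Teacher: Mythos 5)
Your overall route is the same as the paper's: both arguments use the $\sup/\inf$ quantifiers to turn the scalar hypothesis into the pointwise statement $\epsilon_{P^{Syn}}(M)+\epsilon_{P^T}(f^{Syn})\le\epsilon_{P_i}(M)+\mathbf{C}(P_i,P^T)$ (the paper's Eq.~\ref{eq:relation_with_PG}, and its KD analogue), i.e.\ that the Ben--David bound routed through a synthetic source is dominated by the one routed through $P_i$, and then invoke the convex-mixture structure of $P_i^S$ to conclude. The first difficulty you flag --- that term-by-term bounding only yields the weighted average $\alpha^{Syn}\mathcal{J}(P^{Syn})+\alpha^{Syn}_{KD}\mathcal{J}(P^{Syn}_{KD})$ rather than the $\min$ --- is resolved in the paper exactly the way you suspect it must be: the component weights $\boldsymbol{\alpha}$ are treated as free parameters of the bound (the proof sends $\alpha\to 0$ and places the remaining mass on the synthetic source for which the hypothesis holds), so the ``single best intermediary'' is selected by the weights rather than re-derived.

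Your second flagged step, however, is a genuine problem as written. The inequality $|\epsilon_{P^{Syn}}(M)-\epsilon_{P^T}(M)|\le\tfrac12 d_{\mathcal{P}_i\Delta\mathcal{P}_i}(\psi\circ P^{Syn},\psi\circ P^T)$ points the wrong way for absorption: it bounds the first piece of $\mathcal{J}(P^{Syn})$ \emph{above} by $\tfrac12 d$, so it cannot be used to cancel the explicit $\tfrac{1-\alpha}{2}d$ term on the right-hand side of Theorem~\ref{Thm:general_bound}; after your reduction you are left needing $\min\{\mathcal{J}(P^{Syn}),\mathcal{J}(P^{Syn}_{KD})\}+\tfrac12 d\le\epsilon_{P_i}(M)-\epsilon_{P^T}(M)+\mathbf{C}(P_i,P^T)$, while the hypothesis only controls the $\min$ term. (Moreover, $|\epsilon_{P^{Syn}}(M)-\epsilon_{P^T}(M)|$ compares errors against the two \emph{true} labeling functions, whereas Lemma~\ref{Lem:bound_for_virtual} controls the difference of errors measured against the common reference $f^{Syn}$; these differ by labeling-discrepancy terms.) You should be aware that the paper's own proof does not resolve this either: its final display (Eq.~\ref{eq:final_bound_relation}) silently drops the divergence term when identifying its left-hand side with the bound of Theorem~\ref{Thm:general_bound}, so what it actually compares to the local bound is $\alpha^{Syn}\epsilon_{P^{Syn}}(M)+\alpha^{Syn}_{KD}\epsilon_{P^{Syn}_{KD}}(M)+\epsilon_{P^T}(f^{Syn}_{KD})$ without the $\tfrac{1-\alpha}{2}d$ term. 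A clean repair for either argument is to strengthen the hypothesis by the additive term $\tfrac12 d_{\mathcal{P}_i\Delta\mathcal{P}_i}(\psi\circ P^{Syn},\psi\circ P^T)$ (which the REG loss is argued to make small), or to use the trivial bound $\epsilon_{P^T}(M)\le\epsilon_{P^{Syn}}(M)+\mathcal{J}(P^{Syn})$ and compare that alternative synthetic bound against the local one directly.
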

The key idea of Proposition~\ref{Prop:tighter_bound} is to have the generalization bounds induced by $P^{Syn}$ and $P^{Syn}_{KD}$ is smaller than the generalization bound by the local training data distribution $P_i$.
When the local data heterogeneity is severe,  $\inf_{M\in\mathcal{M}_i}(\epsilon_{P_i}(M) - \epsilon_{{P}^T}(M))$ and $\mathbf{C}({P}_i ,{P}^{T})$ would be large.
As the synthetic data and the extended KD data are approaching the the global data distribution, the left side term in (\ref{eq:tighter_bound}) would be small.
Thus, the above proposition points out that, to reach better generalization, the model learning should rely more on the synthetic data and the extended KD data, when the local data are highly heterogeneous and the synthetic and the extended KD datasets are similar to the global ones.

\section{Experiment}

\begin{table*}[t!]
\centering
\caption{\small Heterogeneous model experiments. We compare with model heterogeneous FL methods and report the averaged global accuracy over all client models.The best accuracy is marked in \textbf{bold}. Observe that \ours{} can achieve best averaged global accuracy on \texttt{DIGITS} and \texttt{OFFICE}. For \texttt{CIFAR10C}, \ours{} can outperform most of the baseline methods except for FCCL, which utilizes CIFAR100 as the public dataset. This is because CIFAR100 and
\texttt{CIFAR10C} have a genuine semantic overlap.}
\label{tab:hetero_model}
\begin{adjustbox}{max width=\textwidth}
\begin{threeparttable}
\begin{tabular}{ll|llllll|lllll|ll}
\toprule
\multicolumn{2}{l|}{\multirow{2}{*}{}}                 & \multicolumn{6}{l|}{\texttt{DIGITS}}                                                                                                                             & \multicolumn{5}{l|}{\texttt{OFFICE}}                                                                                               & \multicolumn{2}{l}{\texttt{CIFAR10C}}       \\ \cline{3-15} 
\multicolumn{2}{l|}{}                                  & \multicolumn{1}{l|}{MN(C)\tnote{a}} & \multicolumn{1}{l|}{SV(A)} & \multicolumn{1}{l|}{US(C)} & \multicolumn{1}{l|}{Syn(A)} & \multicolumn{1}{l|}{MM(C)} & Avg   & \multicolumn{1}{l|}{AM(A)} & \multicolumn{1}{l|}{CA(C)} & \multicolumn{1}{l|}{DS(A)} & \multicolumn{1}{l|}{WE(A)} & Avg   & \multicolumn{1}{l|}{0.1}   & 0.2   \\ \hline
\multicolumn{2}{l|}{FedHe}                             & \multicolumn{1}{l|}{59.51} & \multicolumn{1}{l|}{66.67} & \multicolumn{1}{l|}{49.89} & \multicolumn{1}{l|}{75.39}  & \multicolumn{1}{l|}{71.57} & 64.81 & \multicolumn{1}{l|}{33.33} & \multicolumn{1}{l|}{47.17} & \multicolumn{1}{l|}{36.86} & \multicolumn{1}{l|}{52.96} & 42.59 & \multicolumn{1}{l|}{47.73} &  51.26     \\ \hline
\multicolumn{1}{l|}{\multirow{2}{*}{FedDF}} & Cifar100 & \multicolumn{1}{l|}{65.98}     & \multicolumn{1}{l|}{65.21}     & \multicolumn{1}{l|}{61.30}     & \multicolumn{1}{l|}{69.65}      & \multicolumn{1}{l|}{\textbf{74.48}}     & 67.32     & \multicolumn{1}{l|}{38.87} & \multicolumn{1}{l|}{49.51} & \multicolumn{1}{l|}{33.12} & \multicolumn{1}{l|}{46.89} & 42.09 & \multicolumn{1}{l|}{27.69} & 35.70 \\ \cline{2-15}
\multicolumn{1}{l|}{}                       & FMNIST   & \multicolumn{1}{l|}{43.05}     & \multicolumn{1}{l|}{69.14}     & \multicolumn{1}{l|}{44.95}     & \multicolumn{1}{l|}{74.67}      & \multicolumn{1}{l|}{71.27}     & 60.61     & \multicolumn{1}{l|}{39.13} & \multicolumn{1}{l|}{46.53} & \multicolumn{1}{l|}{40.23} & \multicolumn{1}{l|}{43.77} & 42.36 & \multicolumn{1}{l|}{28.26} & 36.50 \\ \hline 
\multicolumn{1}{l|}{\multirow{2}{*}{FCCL}}  & Cifar100 & \multicolumn{1}{l|}{-}     & \multicolumn{1}{l|}{-}     & \multicolumn{1}{l|}{-}     & \multicolumn{1}{l|}{-}      & \multicolumn{1}{l|}{-}     & -     & \multicolumn{1}{l|}{38.22} & \multicolumn{1}{l|}{49.10} & \multicolumn{1}{l|}{44.68} & \multicolumn{1}{l|}{52.26} & 46.07 & \multicolumn{1}{l|}{51.70} & 50.78 \\ \cline{2-15} 
\multicolumn{1}{l|}{}                       & FMNIST   & \multicolumn{1}{l|}{46.43}     & \multicolumn{1}{l|}{61.02}     & \multicolumn{1}{l|}{42.64}     & \multicolumn{1}{l|}{63.05}      & \multicolumn{1}{l|}{66.39}     & 55.91     & \multicolumn{1}{l|}{27.39} & \multicolumn{1}{l|}{46.78} & \multicolumn{1}{l|}{38.56} & \multicolumn{1}{l|}{48.47} & 40.30 & \multicolumn{1}{l|}{\textbf{52.40}}     & 51.83     \\ \hline 
\multicolumn{2}{l|}{FedProto}                          & \multicolumn{1}{l|}{62.59} & \multicolumn{1}{l|}{71.74} & \multicolumn{1}{l|}{58.52} & \multicolumn{1}{l|}{\textbf{81.19}}  & \multicolumn{1}{l|}{74.44} & 69.70 & \multicolumn{1}{l|}{38.08} & \multicolumn{1}{l|}{25.06} & \multicolumn{1}{l|}{26.49} & \multicolumn{1}{l|}{47.22} & 34.21 & \multicolumn{1}{l|}{16.82} & 31.39 \\ \hline 
\multicolumn{2}{l|}{\ours($D_{\rm VHL}^{Syn}$)\tnote{b}}                              & \multicolumn{1}{l|}{54.40} & \multicolumn{1}{l|}{62.03} & \multicolumn{1}{l|}{42.34} & \multicolumn{1}{l|}{67.75}  & \multicolumn{1}{l|}{73.03} & 59.91 & \multicolumn{1}{l|}{8.82}  & \multicolumn{1}{l|}{48.98} & \multicolumn{1}{l|}{16.90} & \multicolumn{1}{l|}{49.13} & 30.96 & \multicolumn{1}{l|}{47.49} & 52.04 \\ \hline 
\multicolumn{2}{l|}{\ours}                              & \multicolumn{1}{l|}{\textbf{70.12}} & \multicolumn{1}{l|}{\textbf{76.17}} & \multicolumn{1}{l|}{\textbf{71.17}} & \multicolumn{1}{l|}{81.10}  & \multicolumn{1}{l|}{73.83} & \textbf{74.47} & \multicolumn{1}{l|}{\textbf{51.35}} & \multicolumn{1}{l|}{\textbf{52.80}} & \multicolumn{1}{l|}{\textbf{52.17}} & \multicolumn{1}{l|}{\textbf{52.31}} & \textbf{54.46} & \multicolumn{1}{l|}{48.19} & \textbf{52.80} \\ \bottomrule 
\end{tabular}
\begin{tablenotes} 
    \item[a] The letter inside the parenthesis is the model architecture used by the client. A and C represent AlexNet and ConvNet, respectively.
    \item[b] For VHL baseline, we use the synthetic data sampling strategy in VHL only. The purpose is to show \ours{} can generate better synthetic anchor data for feature regularization and knowledge distillation.
\end{tablenotes}
\end{threeparttable}
\end{adjustbox}
\vspace{-5mm}
\end{table*}

\subsection{Training Setup}\label{exp:setup}
\noindent\textbf{Datasets and Models}
We extensively evaluate \ours{} under data heterogeneity in our experiments. Specifically, we consider three classification tasks on three sets of domain-shifted datasets:\\
\noindent1) \texttt{DIGITS}=\{MNIST~\cite{lecun1998gradient}, SVHN~\cite{netzer2011reading}, USPS~\cite{hull1994database}, SynthDigits~\cite{ganin2015unsupervised}, MNIST-M~\cite{ganin2015unsupervised}\} contains digits from different styles, and each dataset represents one client.\\
\noindent2) \texttt{OFFICE}=\{Amazon~\cite{saenko2010adapting}, Caltech~\cite{griffin2007caltech}, DSLR~\cite{saenko2010adapting}, and WebCam~\cite{saenko2010adapting}\} contains images from different cameras and environments, and, similarly, each dataset represents one client.\\
\noindent 3) \texttt{CIFAR10C} consists 57 subsets with domain- and label-shifted datasets sampled using Dirichlet distribution with $\beta=2$ from Cifar10-C~\cite{hendrycks2019benchmarking}.\\
More information about datasets and image synthesis can be found in Appendix~\ref{app:synthetic_images}. 
In our model heterogeneity experiments (Sec.~\ref{exp:hetero_model}), we randomly assign model architectures from \{ConvNet, AlexNet\} for each client, while in model homogeneous experiments, we use ConvNet for all clients (see Appendix~\ref{app:model_arch} for model details).

\noindent\textbf{Comparison Methods} 
We compare \ours{} with two sets of baseline federated learning methods: one considers heterogeneous models (Sec.~\ref{exp:hetero_model}) and the other considers homogeneous models (Sec.~\ref{exp:homo_model}). For \emph{\textbf{heterogeneous model experiments}}, we compare with FedHe~\cite{chan2021fedhe}, FedDF~\cite{lin2020ensemble}, FCCL~\cite{huang2022learn}, and FedProto~\cite{tan2022fedproto}, and assume the clients owns personalized models\footnote{For the purposes of this comparison, we have excluded FedFTG~\cite{zhang2022fine} and DENSE~\cite{zhang2022dense}, which address heterogeneities in different learning scenarios. FedFTG focuses on fine-tuning a global model, and DENSE belongs to one-shot FL, and both of them requires aggregate local information and train a generator on the server side. Note that none of the data-sharing-based baseline methods employ privacy-preserving techniques.}. For  \emph{\textbf{homogeneous model experiments}}, we compare with FedAvg~\cite{mcmahan2017communication}, FedProx~\cite{li2019convergence}, 
MOON~\cite{li2021model},
Scaffold~\cite{karimireddy2020scaffold}, FedGen~\cite{zhu2021data}, and VHL~\cite{tang2022virtual}, and assume these baseline methods can leverage a server for global model aggregation. 

\noindent\textbf{FL Training Setup} 
If not otherwise specified, we use SGD optimizer with a learning rate of $10^{-2}$, and our default setting for local model update epochs is 1, total update rounds is 100, and the batch size for local training is 32. Since we only have a few clients for \texttt{DIGITS} and \texttt{OFFICE} experiments, we will select all the clients for each iteration, while we randomly sample $10\%$ and $20\%$ clients for each round when performing \texttt{CIFAR10C} experiments. By default, $\lambda_{\rm REG}$ and $\lambda_{\rm KD}$ are set to 1. 
 
\subsection{Heterogeneous Model Experiments}\label{exp:hetero_model}

The objective of the experiments is to show that \ours{} can effectively leverage and learn generalized information from other clients under data and model heterogeneities. Thus, we report the averaged global accuracy by testing $i$-th local model on every client $j$'s ($\forall j, j \in [N]$) test sets.
Note FedDF and FCCL require accessing to public available data. To make a fair comparison, we use FMNIST~\cite{xiao2017fashion} and Cifar100~\cite{krizhevsky2009learning} as public datasets for knowledge distillation. \ours{}($D_{\rm VHL}^{Syn}$) uses our training pipeline, but the synthetic data is sampled from an untrained StyleGAN~\cite{karras2019style} as in VHL~\cite{tang2022virtual}. 

Based on our experiment results for heterogeneous models in Table.~\ref{tab:hetero_model}, it is evident that our method \ours{} significantly enhances the overall accuracy on \texttt{DIGITS} and \texttt{OFFICE}. In the \texttt{CIFAR10C} experiments, we increase the total training rounds to 200 since the performance of FL is notably hindered by a low client sampling ratio, especially in the case of serverless methods. As shown in the table, \ours{} can improve the global accuracy and out-performs most of the baseline methods, except for FCCL when the client sampling ratio is 0.1. We believe this is due to the extreme decentralized learning setting, \emph{i.e.,} under model and data heterogeneity, serverless, and low client sampling ratio scenarios. FCCL utilizes pre-trained models, which provides a good starting point. However, we observe that the method tends to overfit easier by the fact that the performance drops when the client sampling ratio increases. Moreover, it is worth mentioning that the accuracies of FedDF and FCCL vary significantly when switching from Cifar100 to FMNIST (the FedDF-\texttt{DIGITS} and FCCL-\texttt{OFFICE} results). We found that the training with both methods are easy to obtain \texttt{nan} loss, which we mark as `-' in the table. This suggests that these methods strongly rely on the public data, which restricts the utility of the methods. In contrast, \ours{} does not depend on public data, and furthermore, it is completely serverless.

\subsection{Homogeneous Model Experiments}\label{exp:homo_model}

\begin{table}[th]
\centering
\caption{\small Homogeneous model experiments. We compare with model homogeneous FL methods and report the averaged local accuracy. The best accuracy is marked in \textbf{bold}.}
\resizebox{0.85\linewidth}{!}{
\begin{tabular}{l|l|l|ll}
\toprule
\multirow{2}{*}{} & \multirow{2}{*}{\texttt{DIGITS}} & \multirow{2}{*}{\texttt{OFFICE}} & \multicolumn{2}{l}{\texttt{CIFAR10C}}      \\ \cline{4-5} 
                  &                         &                         & \multicolumn{1}{l|}{0.1}   & 0.2   \\ \hline
FedAvg            & 94.20                   & 76.45                   & \multicolumn{1}{l|}{65.26} & 66.40 \\ \hline
FedProx           & 94.19                   & 76.45                   & \multicolumn{1}{l|}{65.33} & 66.36 \\ \hline
MOON              & 94.37                   & 73.64                   & \multicolumn{1}{l|}{64.74} & 66.70 \\ \hline
Scaffold          & 94.95                   & 77.52                   & \multicolumn{1}{l|}{\textbf{65.66}} & 67.15 \\ \hline
VHL               & 94.11                   & 75.69                   & \multicolumn{1}{l|}{64.67} & 66.55 \\ \hline
FedGen            & 82.62                   & 63.60                   & \multicolumn{1}{l|}{45.77} & 48.10 \\ \hline
\ours{}              & \textbf{95.53}                   & \textbf{82.92}                   & \multicolumn{1}{l|}{64.47} & \textbf{68.13} \\ \bottomrule
\end{tabular}
}
\label{tab:homo_exp}
\vspace{-3mm}
\end{table}

Among the baseline methods for homogeneous model experiments, they learn a generalizable global model via model aggregation on a server, which is not required in \ours{}. As these baseline methods are only evaluated on their single global model, for a fair comparison, we report the averaged local accuracy in the experiments. Specifically, the average local accuracy for the baseline methods is the average performance over testing the global model over all the clients; while we report, we report the average local accuracy on testing $i$-th local model on client $i$'s test set over all the clients. 

One can observe from Table~\ref{tab:homo_exp} that \ours{} can effectively leverage local information and outperforms other methods \texttt{DIGITS} and \texttt{OFFICE}. For \texttt{CIFAR10C}, although \ours{} has highest averaged local accuracy when client ratio is 0.2, it has lower performance when client ratio is 0.1. This is because smaller client sampling ratios have a larger impact on decentralized learning as we do not have a global model, and thus some clients may suffer from low model performance due to insufficient training and the scarce global information from the sampled neighbor clients. Overall, despite the serverless setting, \ours{} is compatible with the baseline methods that have central servers.

\subsection{Ablation studies for \ours{}}\label{exp:ablation}

The effectiveness of \ours{} relies on the novel designs of synthetic anchor data and the losses. To evaluate how these designs influences the performance of \ours{}, we vary the number of synthetic anchor data (IPC) and the loss coefficients ($\lambda's$) in the following paragraphs. If not otherwise specified, we use the default hyperparameters and model heterogeneous setting. We report the averaged global accuracy in this section.

\begin{figure}[th]
\centering
\vspace{-3mm}
\subfloat[$\lambda_{KD}$]{  
    \includegraphics[width=0.48\linewidth]{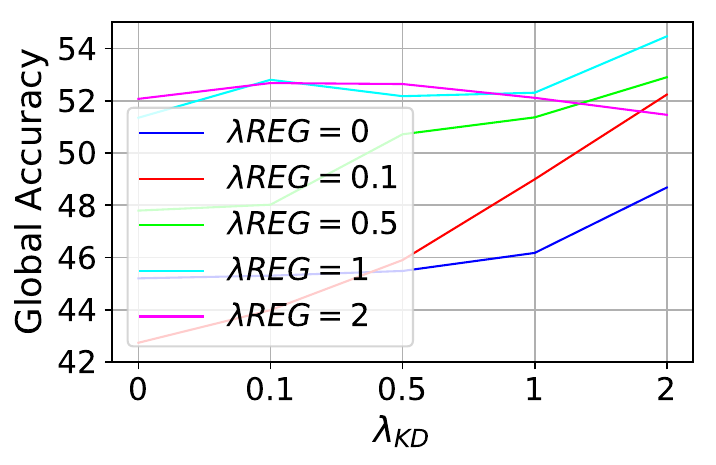}}
\subfloat[$\lambda_{REG}$]{
    \includegraphics[width=0.48\linewidth]{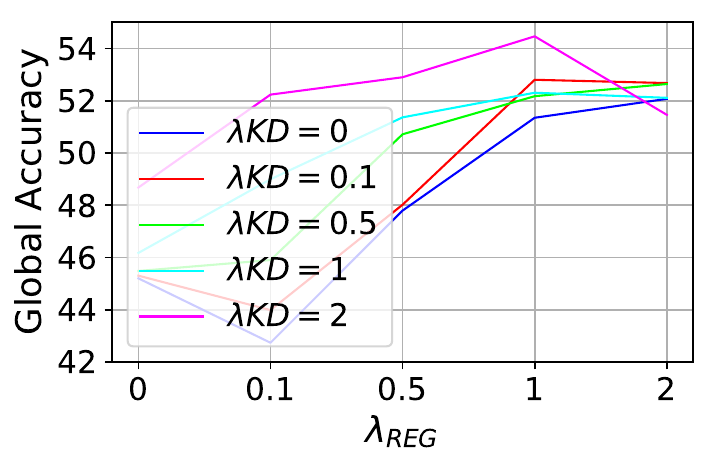}}
\caption{\small Ablation studies for $\lambda$'s using \texttt{OFFICE}. We report the averaged global accuracy when changing the $\lambda$ values.}
\label{fig:ablation_office}
\vspace{-3mm}
\end{figure}

\begin{table}[th]
\centering
\caption{\small Ablation study on the size of synthetic dataset using \texttt{DIGITS}. We use Images-Per-Class (IPC) to show its size.}
\resizebox{\linewidth}{!}{
\begin{tabular}{l|l|l|l|l|l|l}
\toprule
IPC        & 5     & 10    & 20    & 50    & 100    & 200   \\ \hline
Global Acc & 70.12 & 72.74 & 72.46 & 74.32 & 70.29 & 70.45 \\ \bottomrule
\end{tabular}
}
\label{tab:ipc}
\vspace{-3mm}
\end{table}

\noindent\textbf{Evaluation of $\lambda$ Selections}
$\lambda_{KD}$ and $\lambda_{REG}$ play an important role to help the clients to learn generalized information as well as improving the performance on local data. We select \texttt{OFFICE} as the candidate dataset for this set of experiments because it has larger domain-shift. We vary both of $\lambda_{KD}$ and $\lambda_{REG}$ between 0 to 2 and report the global accuracy in Figure~\ref{fig:ablation_office}(a). One can observe that when $\lambda_{KD}$ or $\lambda_{REG}$ increases, the overall global accuracy increases. However, in Figure~\ref{fig:ablation_office}(b), when we increase $\lambda_{REG}$ to 2, the performance drops. This happens because the magnitude of REG loss term dominates the total training loss. Overall, we conclude that both $\lambda_{KD}$ helps the local model learn information from other clients' models, and $\lambda_{REG}$ improves the global performance by enforcing the local model to learn generalized features.

\noindent\textbf{Evaluation of Size of Synthetic Dataset}
The size of synthetic data is a critical hyperparameter for \ours{} as it represents the shared local information. Since \ours{} synthesizes class-balanced data, we use Images-Per-Class (IPC) to represent the size of the synthetic data. We select \texttt{DIGITS} as the candidate dataset for this set of experiments because it contains larger number of data for each client, which allows us to increase IPC up to 200. One can observe in Table~\ref{tab:ipc} that blindly increasing the IPC does not guarantee to obtain optimal global accuracy. It will cause the loss function to be dominated by the last 2 terms of Eq.~\ref{eq:general_bound}, \textit{i.e.,} by synthetic data. However, synthesizing larger number of synthetic data may degrade its quality, and the sampled batch for $\mathcal{L}_{REG}$ may fail to capture the distribution.

\subsection{Further Discussion}\label{exp:further}

\noindent\textbf{Communication Overhead}
Although \ours{} requires training and transferring local synthetic data to every clients, the process happens before the FL training, and can be pre-processed offline.
During \ours{} training, clients only need to share logits \textit{w.r.t.} global synthetic data, resulting in a lightweight \textit{in-training} communication overhead. We discuss the communication of \ours{} compared to standard FL in Appendix~\ref{app:communication}, and show that \ours{} has lower overall communication overhead.

\noindent\textbf{Privacy}
\ours{} requires sharing image-level information among clients in FL, which may raise privacy concerns. Therefore, we empirically show that our distilled synthetic data can protect against some privacy attacks in Appendix~\ref{app:MIA}. We further discuss \ours{}'s potential for higher privacy guarantee using Differential Privacy~\cite{abadi2016deep} in Appendix~\ref{app:DP}.
Furthermore, we claim that decentralized FL with both data and model heterogeneities is an extremely challenging setting, where existing solutions either require sharing real public data~\cite{lin2020ensemble,huang2022learn} or synthetic data generated from GAN-based generator~\cite{zhang2022dense,zhang2022fine}.

\noindent\textbf{Theory vs. Practice}
We note that obtaining the tight bound from our theoretical findings in Theorem~\ref{Thm:general_bound} requires proper dataset distillation. 
In Section ~\ref{sec:data_generation}, we propose an efficient approximate solution for dataset distillation. Although perfect dataset distillation may not be achievable (as shown in the visualization in Appendix~\ref{app:synthetic_images}), we have found through experimentation that using our synthetic data in combination with the proposed $\mathcal{L}_{REG}$ (Eq.~\ref{eq:lsgd}) and $\mathcal{L}_{KD}$ (Eq.~\ref{eq:kd}) can already lead to improved generalization.

\section{Conclusion}
A novel and effective method, \ours{}, is presented that utilizes synthetic data to deal with both data and model heterogeneities in serverless decentralized FL. In particular, \ours{} introduces a pipeline that involves synthetic data generalization, and we propose a new scheme that incorporates the synthetic data as anchor points in decentralized FL model training. To address heterogeneity issues, we utilize the synthetic anchor data and propose two regularization losses: anchor loss and knowledge distillation loss. We provide theoretical analysis on the generalization bound to justify the effectiveness of \ours{} using the synthetic anchor data. Empirically, the resulted client models not only achieve compelling local performance but also can generalize well onto other clients' data distributions, boosting cross-domain performance. Through extensive experiments on various classification tasks, we show that \ours{} robustly improves the efficacy of collaborative learning when compared with state-of-the-art methods, under both model and data heterogeneous settings.
\section*{Acknowledgement}
This work is supported in part by the Natural Sciences and Engineering Research Council of Canada (NSERC), Public Safety Canada, CIFAR Catalyst Grant, and Compute Canada Research Platform.
\section*{Impact Statement}
This paper represents an effort to progress the field of machine learning and distributed learning. Our work has various potential societal impacts, although we do not believe any specific one must be highlighted here.


\bibliography{ref}
\bibliographystyle{icml2024}

\newpage
\appendix
\onecolumn

\textbf{Road Map of Appendix}
Our appendix is organized into five sections. The theoretical analysis and proof is in Appendix~\ref{app:proof}. Appendix~\ref{app:MIA} shows the results for Membership Inference Attack (MIA) on \ours{} trained models using \texttt{DIGITS} datasets. Appendix~\ref{app:DP} discusses how we inject DP mechanism in our data synthesis process, and shows that using DP synthetic anchor data for \ours{} can still yeilds comparable performance. Appendix~\ref{app:synthetic_images} introduce the selected datasets and how we synthesize anchor data in detail. Appendix~\ref{app:model_arch} describes the model architectures (ConvNet and AlexNet) we use in our experiments. Finally, Appendix~\ref{app:related} provides a detailed literature review about the related works. Our code and model checkpoints are available along with the supplementary materials.

\section{Theoretical Analysis and Proofs}\label{app:proof}

\subsection{Notation}
\begin{table}[htbp]\caption{Notations used}
\begin{center}
\begin{tabular}{r c p{10cm} }
\toprule
$\bm{D_i} = (x_i, y_i)_{i = 1}^{i = m}$ & $\triangleq$ & Local Dataset of client $C_i$\\ 
$P_i(x, y)$ & $\triangleq$ & The local joint  distribution of client $C_i$\\
$L_{CE}(a, b)$ & $\triangleq$ & Cross entropy Loss between distributions $a$ and $b$\\
$L_{KL}(a, b)$ & $\triangleq$ & KL divergence loss between distributions($a, b$)\\  
$\mathcal{M}_i$ & $\triangleq$ & Model Space of Client $C_i$
\\
$\mathcal{P}_i$ & $\triangleq$ & Space of Classifier heads for Client $C_i$
\\
$\Psi_i$ & $\triangleq$ & Space of encoder heads for Client $C_i$
\\
$M_i = \rho_i \circ \psi_i$ & $\triangleq$ & Model for client $C_i$ with encoder and decoder heads sampled from $\mathcal{P}_i$ and $\Psi_i$
\\
$\bm{\alpha} = [ \alpha, \alpha^{Syn}, \alpha^{Syn}_{KD}]$ & $\triangleq$ & Component weights for the losses, as defined in Eq. ~\ref{Eq:Hull}
\\
\multicolumn{3}{c}{}\\
\multicolumn{3}{c}{\underline{Notations from \cite{ben2010theory}}}\\
\multicolumn{3}{c}{}\\
$(P, f^P)$ & $\triangleq$ & $P(x)$ is the source data distribution and $f^P$ is the optimal labelling function \\ 
$\epsilon_{P}(M)$ & $\triangleq$ & $\Pr_{x \sim P(x)} (M(x) \neq f_P(x)) $\\
$d_{\mathcal{H}\Delta\mathcal{H}}(P_s, P_t)$ & $\triangleq$ &$ 2 \underset{h,h' \in \mathcal{H}}{\sup} | \Pr_{x \sim P_s} (h(x) \neq h(x'))| - \Pr_{x \sim P_t} (h(x) \neq h(x'))|  $\\
$\lambda(P)$ & $\triangleq$ & Least error of a jointly trained model =$\min_{M \in \mathcal{M}} \epsilon_P(M) + \epsilon_{P^T}(M)$\\
$\mathbf{C}(P_i, P^T)$ & $\triangleq$ & A distance term appearing in \cite{ben2010theory} $ = \frac{1}{2} (d_{\mathcal{M}\Delta \mathcal{M}} (P_i, P^T) + \lambda(P_i)$
\\
\bottomrule
\end{tabular}
\end{center}
\label{tab:TableOfNotationForMyResearch}
\end{table}

\subsection{Proof for Theorem \ref{Thm:general_bound}}

\begin{proof}
The training data at $i$th client are from as three distributions: 1) the local source data; 2) the global virtual data; 3) the extended KD data. The data from the first two groups are used for the cross entropy loss and the distribution divergence, while the third is used for Knowledge distillation.

Without loss of generality, at $i$th client, we set the weight for $P_i$, $P^{Syn}$ and ${P}_{KD}^{Syn}$ as $\alpha$, $\alpha^{Syn}$ and ${\alpha}_{KD}^{Syn}$, respectively. For notation simplicity, we assume $\alpha+\alpha^{Syn}+{\alpha}_{KD}^{Syn}=1$. 
Then the training source data at $i$th client is ${P}_i^{S} = 
\alpha {P}_i + \alpha^{Syn}{P}^{Syn} + {\alpha}_{KD}^{Syn}{P}_{KD}^{Syn}$.

From Theorem 2 in \cite{ben2010theory}, it holds that 
\begin{align}\label{eq:loss_on_Pi}
    \epsilon_{{P}^T}(M_i) \le \epsilon_{{P}_i}(M_i) + \mathbf{C}(P_i, P_T) 
\end{align}
where $\mathbf{C}(P_i, P_T) = \frac{1}{2} d_{\mathcal{M}_i\Delta\mathcal{M}_i}({P}_i ,{P}^{T}) + \lambda({P}_i)$ and $\lambda(P_i) = \min_{M\in\mathcal{M}_i}\epsilon_{{P}_i}(M)+\epsilon_{{P}^T}(M)$ is a constant. These terms are small unless the data heterogeneity is severe \cite{ben2010theory}.
Then with~(\ref{eq:loss_on_Pi}) and Lemma~\ref{Lem:bound_for_virtual}, 
we have three inequalities, which we will add after multiplying each one of them with their corresponding component weight $\bm{\alpha}$.

Furthermore, note that the support of $P^{Syn}$ and $P^{Syn}_{KD}$ are the same. (\ref{Def: Extended KD}). Therefore, their distances from the support of the global distribution will also be the same, i.e.

$$
    d_{\mathcal{P}_i \Delta \mathcal{P}_i} (\psi \circ P^{Syn}, \psi \circ P^{T}) = d_{\mathcal{P}_i \Delta \mathcal{P}_i} (\psi \circ P^{Syn}_{KD}, \psi \circ P^{T})
$$

Continuing after adding all three inequalities and using Claim ~\ref{Claim: equivalent} to introduce $\epsilon_{P_i^S}(M)$
\begin{align}
    \epsilon_{{P}^T}(M_i) \le &  \epsilon_{{P}_i^S}(M_i) + \mathbf{C}(P_i, P^T)  +  \frac{\alpha^{Syn}}{2} d_{\mathcal{P}\Delta \mathcal{P}} (\psi \circ P^{Syn}, \psi \circ P^T) \notag\\
    &
    +  \epsilon_{{P}^T}(f^{Syn}) ) + \frac{\alpha^{Syn}_{KD}}{2} d_{\mathcal{P}\Delta \mathcal{P}} (\psi \circ P^{Syn}, \psi \circ P^T) +  \epsilon_{{P}_{KD}^T}(f^{Syn}) ) \notag\\
    \le &  \epsilon_{{P}_i^S}(M_i) + \alpha \mathbf{C}(P_i, P^T) 
    +  \alpha^{Syn}\epsilon_{{P}^T}(f^{Syn})  + \alpha^{Syn}_{KD}  \epsilon_{{P}_{KD}^T}(f^{Syn}) \notag\\
    & + \frac{(1 - \alpha)}{2} d_{\mathcal{P}\Delta \mathcal{P}} (\psi \circ P^{Syn}, \psi \circ P^T)
\end{align}

With the last condition coming from $\alpha + \alpha^{Syn} + \alpha^{Syn}_{KD} = 1$ 
\begin{align}
    \epsilon_{{P}^T}(M_i) \le  \epsilon_{{P}_i^S}(M_i) + \alpha \mathbf{C}(P_i, P^T) 
    +  \alpha^{Syn}\epsilon_{{P}^T}(f^{Syn})  + \alpha^{Syn}_{KD}  \epsilon_{{P}^T}(f_{KD}^{Syn}) + \frac{(1 - \alpha)}{2} d_{\mathcal{P}\Delta \mathcal{P}} (\psi \circ P^{Syn}, \psi \circ P^T)
\end{align}
\end{proof}

\subsection{Interpretation for Theorem~\ref{Thm:general_bound}}\label{Thm:interpretation}
 From Eq.~(\ref{eq:general_bound}), it can be seen that the generalization bound for $M_i$ consists of five terms.
\begin{itemize}
    \item The first term $\epsilon_{{P}_i^S}(M_i)$ is the error bound with respect to the training source data distribution. With Claim~\ref{Claim: equivalent} in appendix, minimizing this term is equivalent to optimizing the loss $\alpha \mathbb{E}_{(\x,y)\sim P_i}\mathcal{L}_{\text{CE}} + \alpha^{Syn} \mathbb{E}_{(\x,y)\sim P^{Syn}}\mathcal{L}_{\text{CE}} + \alpha^{Syn}_{\text{KD}}\mathbb{E}_{(\x,y)\sim P^{Syn}}\mathcal{L}_{\text{KD}}$. Since this is the form of our loss function  in Eq.~\ref{eq:glob}, we expect this term to be minimized 

    
    \item The second term is inherited from the original generalization bound in \cite{ben2010theory} with the local training data. For our case, it can be controlled by the component weight $\alpha$. If we rely less on the local data (i.e. smaller $\alpha$), then these terms will be vanishing. Moreover even if we rely more on local data, it is essentially a distance measure between the local client distribution $P_i$ and the global data distribution $P^T$. Since the global data distribution is an average of the closely related local client distributions, we expect this term to be small \cite{tang2022virtual}, \cite{ben2010theory}, \cite{albuquerque2019generalizing}.

    \item  The third term  measures the discrepancy between real labeling and the synthetic data labeling mechanisms. This discrepancy will be low because of our synthetic data generation process. Note that the data distillation's objective is to achieve $\mathbb{E}_{\x \sim P^T} [l(M^{T}(\x),y)] \simeq \mathbb{E}_{\x \sim P^{Syn}} [l(M^{Syn}(\x),y)]$ (Eq. 1 in \cite{zhao2023dataset}). If we change the $M$ to a well-trained deep NN, then it's easy to see the synthetic data labelling $f^{syn}$ will be similar to the real labelling $f_T$. Here we leverage the distribution matching that uses MMD loss to minimize the embedding differences between the synthetic data and real data in the same class~\cite{zhao2023dataset} as a proxy way to achieve that.
    
    \item The fourth term originates from the knowledge distillation loss in equation \ref{eq:kd}. Here, we use the consensus knowledge from neighbour models to improve the local model. The labelling function of the extended KD data $f^{syn}_{KD}$, changes as training continues and the neighbour models learn to generalize well. Towards the end of training, predictions from the consensus knowledge should match the predictions of the true labeling function, therefore, $f^{syn}_{KD}$ will be close to $f_T$.

    \item The fifth term is a distribution divergence between the encoded distributions of $P^{Syn}$ and $P^T$. This is minimized by the domain invariant regularizer in Eq. \ref{eq:lsgd}, which acts as an anchor to pull all the encoded distributions together.

\end{itemize}

\textbf{Remark:} In order to get a tight generalization guarantee, we only need one of the fourth or fifth terms to be small. Since, if either any one of them is small, we can adjust the component weights $\alpha$ (practically $\lambda_{REG}$ and $\lambda_{KD}$) to get a better generalization guarantee.

\subsection{Proof for Proposition~\ref{Prop:tighter_bound}}
\begin{proof}
Without loss of generality, let's start with 
\begin{align}
     \sup_{M\in\mathcal{M}_i}|\epsilon_{P^{Syn}}(M) - \epsilon_{{P}^T}(M)| +  & \epsilon_{{P}^T}(f^{Syn}) \le  \notag\\
     & \inf_{M\in\mathcal{M}_i}(\epsilon_{P_i}(M) - \epsilon_{{P}^T}(M)) + \underbrace{\frac{1}{2} d_{\mathcal{M}_i\Delta\mathcal{M}_i}({P}_i ,{P}^{T}) + \lambda({P}_i).}_{\mathbf{C}(P_i, P^T)} 
\end{align}
Then it holds that for any $M\in\mathcal{M}_i$,
\begin{align}\label{eq:relation_with_PG}
     \epsilon_{P^{Syn}}(M) - \epsilon_{{P}^T}(M) +  \epsilon_{{P}^T}(f^{Syn}) &\le \epsilon_{P_i}(M) - \epsilon_{{P}^T}(M) +   \mathbf{C}(P_i, P^T)\notag\\
    \Rightarrow~~  \epsilon_{P^{Syn}}(M) +  \epsilon_{{P}^T}(f^{Syn}) & \le \epsilon_{{P}_i}(M)+ \mathbf{C}(P_i, P^T)
\end{align}
Note that the right side of~(\ref{eq:relation_with_PG}) is the original bound in Theorem 2 in \cite{ben2010theory}. 
Similarly, we can achieve 
\begin{align}\label{eq:relation_with_PG_KL}
\epsilon_{P_{KD}^{Syn}}(M) +  \epsilon_{{P}^T}(f_{KD}^{Syn}) & \le \epsilon_{{P}_i}(M)+  \frac{1}{2} d_{\mathcal{M}_i\Delta\mathcal{M}_i}({P}_i ,{P}^{T}) + \lambda({P}_i) 
\end{align}
Combining (\ref{eq:relation_with_PG}-\ref{eq:relation_with_PG_KL}) together with the component weights $\bm{\alpha}$ and setting $\alpha \rightarrow 0$,

\begin{align}\label{eq:final_bound_relation}
\alpha^{Syn}_{KD}\epsilon_{P_{KD}^{Syn}}(M) +  
\alpha^{Syn}\epsilon_{P^{Syn}}(M) + 
\epsilon_{{P}^T}(f_{KD}^{Syn}) & \le \epsilon_{{P}_i}(M)+  \frac{1}{2} d_{\mathcal{M}_i\Delta\mathcal{M}_i}({P}_i ,{P}^{T}) + \lambda({P}_i) 
\end{align}

Therefore, we conclude that our global generalization bound in Theorem \ref{Thm:general_bound} (which is the LHS of Eq.~\ref{eq:final_bound_relation}) is tighter than the original bound in \cite{ben2010theory} (RHS), when the condition of Proposition \ref{Prop:tighter_bound} holds. 

\end{proof}

\subsection{Some useful lemmas and claims}

\begin{lem}\label{Lem:bound_for_virtual}
Denote the model as $M = \rho \circ \psi \in \mathcal{M}$. The global generalization bound holds as 
\begin{align}
     \epsilon_{{P}^T}(M)  \le  \epsilon_{{P}}(M) + \frac{1}{2} d_{\mathcal{P}\Delta \mathcal{P}} (\psi \circ P, \psi \circ P^T) +  \epsilon_{{P}^T}(f),
\end{align}
where $(P, f)$ could be either $(P^{Syn}, f^{Syn})$ or $(P^{Syn}_{KD},f^{Syn}_{KD})$ pair.
\end{lem}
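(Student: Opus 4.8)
The plan is to reduce everything to the $0$–$1$ disagreement pseudometric and then invoke two elementary facts: the triangle inequality for that metric, and the definition of the $\mathcal{H}\Delta\mathcal{H}$ divergence instantiated in the latent space. For any two labelings $g,h$ and distribution $D$, write $\epsilon_D(g,h) \triangleq \Pr_{x\sim D}[g(x)\neq h(x)]$; since $\{g_1\neq g_3\}\subseteq\{g_1\neq g_2\}\cup\{g_2\neq g_3\}$, this obeys the triangle inequality $\epsilon_D(g_1,g_3)\le \epsilon_D(g_1,g_2)+\epsilon_D(g_2,g_3)$, which will be the workhorse. Note that $\epsilon_{P^T}(M)=\epsilon_{P^T}(M,f^T)$ and $\epsilon_{P^T}(f)=\epsilon_{P^T}(f,f^T)$ under this notation.

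First I would split the target error through the source labeling function $f$. The triangle inequality with the intermediate function $f$ gives $\epsilon_{P^T}(M)=\epsilon_{P^T}(M,f^T)\le \epsilon_{P^T}(M,f)+\epsilon_{P^T}(f,f^T)=\epsilon_{P^T}(M,f)+\epsilon_{P^T}(f)$. This already produces the third summand and isolates the cross-domain term $\epsilon_{P^T}(M,f)$. Next I would transfer this term from the target to the source by adding and subtracting $\epsilon_P(M,f)$, namely $\epsilon_{P^T}(M,f)\le \epsilon_P(M,f)+|\epsilon_{P^T}(M,f)-\epsilon_P(M,f)|$. Because $f$ is by definition the optimal labeling function $f^P$ of $P$, the first term collapses to $\epsilon_P(M,f^P)=\epsilon_P(M)$, the source error.

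The key step is bounding the discrepancy $|\epsilon_{P^T}(M,f)-\epsilon_P(M,f)|$ by $\tfrac12 d_{\mathcal{P}\Delta\mathcal{P}}(\psi\circ P,\psi\circ P^T)$, and here I would push the computation into the latent variable $z=\psi(x)$. Since $M=\rho\circ\psi$ with $\rho\in\mathcal{P}$, and (crucially) $f$ is realizable as $\rho_f\circ\psi$ for some $\rho_f\in\mathcal{P}$ on top of the shared encoder $\psi$, both disagreement probabilities become events on the encoded distributions: $\epsilon_P(M,f)=\Pr_{z\sim\psi\circ P}[\rho(z)\neq\rho_f(z)]$ and likewise for $P^T$. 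The definition of the $\mathcal{H}\Delta\mathcal{H}$ divergence at the head class $\mathcal{P}$ is exactly twice the supremum over pairs in $\mathcal{P}$ of such differences, so $|\epsilon_{P^T}(M,f)-\epsilon_P(M,f)|\le \tfrac12 d_{\mathcal{P}\Delta\mathcal{P}}(\psi\circ P,\psi\circ P^T)$. Chaining the three displays yields the claim, and the identical argument applies verbatim for $(P,f)=(P^{Syn},f^{Syn})$ or $(P^{Syn}_{KD},f^{Syn}_{KD})$.

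The hard part will be justifying that the labeling function $f$ factors through the encoder $\psi$ with a head in $\mathcal{P}$; this is precisely what lets the \emph{latent-space} head divergence $d_{\mathcal{P}\Delta\mathcal{P}}$, rather than the coarser full-model divergence $d_{\mathcal{M}\Delta\mathcal{M}}$, control the source-to-target transfer. I would make this explicit, either as a mild realizability assumption on the synthetic labeling function or by reading $d_{\mathcal{P}\Delta\mathcal{P}}(\psi\circ P,\psi\circ P^T)$ as the $\mathcal{H}\Delta\mathcal{H}$ divergence of the composed class $\{\rho\circ\psi:\rho\in\mathcal{P}\}$ evaluated on the input distributions. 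Everything else is a routine application of the triangle inequality and the divergence definition borrowed from \cite{ben2010theory}.
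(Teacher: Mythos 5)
Your proof is correct and follows essentially the same route as the paper's: a triangle inequality through the source labeling function $f$ to peel off $\epsilon_{P^T}(f)$, followed by bounding the remaining source-to-target discrepancy by pushing the disagreement event into the latent space and taking the supremum over head pairs in $\mathcal{P}$, which is by definition $\tfrac12 d_{\mathcal{P}\Delta\mathcal{P}}(\psi\circ P,\psi\circ P^T)$. The realizability point you flag --- that $f$ must factor as $\rho_f\circ\psi$ with $\rho_f\in\mathcal{P}$ for the head-class supremum to dominate --- is exactly the step the paper handles implicitly by writing $f^{Syn}\circ\psi^{-1}$, so making it an explicit assumption as you propose is a fair (and slightly more careful) rendering of the same argument.
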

\begin{proof}
For any model $M = \rho \circ \psi \in \mathcal{M}$, 
we have the following bound for the global virtual data distribution:
\begin{align}\label{eq:loss_on_Pg}
    \epsilon_{{P}^T}(M) - \epsilon_{{P}^{Syn}}(M) &\stackrel{(a)}{=} \epsilon_{{P}^T}(M, f^T) -  \epsilon_{{P}^{Syn}}(M, f^{Syn}) \notag\\
    &\stackrel{(b)}{\le} |\epsilon_{{P}^T}(M, f^{Syn}) + \epsilon_{{P}^T}(f^{Syn}, f^T)-  \epsilon_{{P}^{Syn}}(M, f^{Syn})| \notag\\
    &\le |\epsilon_{{P}^T}(M, f^{Syn}) -  \epsilon_{{P}^{Syn}}(M, f^{Syn}) | +  \epsilon_{{P}^T}(f^{Syn})  \notag\\
    &= |\epsilon_{{P}^T}(\rho \circ \psi, f^{Syn}) -  \epsilon_{{P}^{Syn}}(\rho \circ \psi, f^{Syn}) | +  \epsilon_{{P}^T}(f^{Syn}) \notag\\
    &= |\epsilon_{\psi \circ {P}^T}(\rho , f^{Syn}\circ\psi^{-1}) -  \epsilon_{\psi \circ {P}^{Syn}}(\rho , f^{Syn}\circ\psi^{-1}) | +  \epsilon_{{P}^T}(f^{Syn}) \notag\\
    &\le \sup_{\rho,\rho^\prime} |\epsilon_{\psi \circ {P}^T}(\rho , \rho^\prime) -  \epsilon_{\psi \circ {P}^{Syn}}(\rho , \rho^\prime) | +  \epsilon_{{P}^T}(f^{Syn}) 
    \notag \\
    &\le \frac{1}{2} d_{\mathcal{P}\Delta \mathcal{P}} (\psi \circ P^{Syn}, \psi \circ P^T) + \epsilon_{P^T}(f^{Syn})
\end{align}
where (a) is by definitions and (b) relies on the triangle inequality for classification error \cite{ben2006analysis,crammer2008learning}.
Thus, we have that 
\begin{align}
     \epsilon_{{P}^T}(M)  \le  \epsilon_{{P}^{Syn}}(M) + \frac{1}{2} d_{\mathcal{P}\Delta \mathcal{P}} (\psi \circ P^{Syn}, \psi \circ P^T)  +  \epsilon_{{P}^T}(f^{Syn}).
\end{align}
Similarly, as the the extended KD dataset shares the same feature distribution with the global virtual dataset, thus the above bound also holds for $f^{Syn}_{KD}$.
\end{proof}

\begin{lem}[Appendix A \cite{feng2021kd3a}]\label{Lem:KL_to_bound}
    For the extended source domain $(\x^{Syn}, \hat{y}^{Syn})\sim \hat{P}^{Syn}$, training the related model with the knowledge distillation loss $L_{KD}=D_{KD}(\hat{y}^{Syn}\|h(\x))$ equals to optimizing the task risk $\epsilon_{\hat{P}^{Syn}}=\mathbb{P}_{(\x^{Syn}, \hat{y}^{Syn})\sim \hat{P}^{Syn}}[h(\x)\neq \arg\max \hat{y}^{Syn}]$.
\end{lem}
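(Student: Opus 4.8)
The plan is to show that the knowledge-distillation loss equals, up to an additive constant independent of the model, the cross-entropy of $h(\x)$ against the soft target $\hat{y}^{Syn}$, and that this cross-entropy is a calibrated surrogate that upper-bounds the $0$-$1$ task risk with hard label $\argmax_k \hat{y}^{Syn}_k$. Consequently, driving the KD loss to its minimum drives the task risk $\epsilon_{\hat{P}^{Syn}}$ to its minimum, which is the asserted equivalence.

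First I would rewrite the per-sample KD loss as a cross-entropy. Using $D_{KD}(\hat{y}^{Syn}\|h(\x)) = H(\hat{y}^{Syn},h(\x)) - H(\hat{y}^{Syn})$, where $H(\hat{y}^{Syn},h(\x)) = -\sum_k \hat{y}^{Syn}_k \log h(\x)_k$ is the cross-entropy and $H(\hat{y}^{Syn})$ is the Shannon entropy of the fixed soft target, and observing that $H(\hat{y}^{Syn})$ does not depend on the trainable model $h$, minimizing $\E_{\hat{P}^{Syn}}[D_{KD}(\hat{y}^{Syn}\|h(\x))]$ over $h$ is equivalent to minimizing $\E_{\hat{P}^{Syn}}[H(\hat{y}^{Syn},h(\x))]$.

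Next I would tie the cross-entropy to the task risk. Write $c=\argmax_k \hat{y}^{Syn}_k$ for the hard label. The elementary observation is that whenever $\argmax_k h(\x)_k \neq c$ one must have $h(\x)_c \le \tfrac12$ (if $h(\x)_c>\tfrac12$ then the competing mass $\sum_{k\neq c}h(\x)_k<\tfrac12$ forces $c$ to be the unique maximizer). Hence $-\log h(\x)_c \ge \log 2$ on every misclassified point, so
\[
\1[\argmax_k h(\x)_k \neq c] \;\le\; \frac{-\log h(\x)_c}{\log 2} \;\le\; \frac{H(\hat{y}^{Syn},h(\x))}{\hat{y}^{Syn}_c \log 2},
\]
where the last step uses that every term of the cross-entropy is nonnegative, so the full sum dominates the single term $\hat{y}^{Syn}_c(-\log h(\x)_c)$. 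Taking expectations over $\hat{P}^{Syn}$ gives $\epsilon_{\hat{P}^{Syn}} \le C\,\E_{\hat{P}^{Syn}}[D_{KD}(\hat{y}^{Syn}\|h(\x))] + \text{const}$, so the KD loss upper-bounds the task risk; and at a perfect match $h(\x)=\hat{y}^{Syn}$ both the KD loss and $\epsilon_{\hat{P}^{Syn}}$ vanish simultaneously, which closes the equivalence.

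The main obstacle I anticipate is making the surrogate-to-risk step uniform when the consensus soft label is nearly flat: the constant $1/(\hat{y}^{Syn}_c\log 2)$ blows up as $\hat{y}^{Syn}_c\to 1/K$, so the indicator bound degrades. I would handle this by assuming (or arguing, once neighbour models begin to agree) that the dominant coordinate $\hat{y}^{Syn}_c$ is bounded below, so the surrogate stays calibrated. A secondary subtlety is the precise meaning of ``equals''; the cleanest reading, which I would adopt and which matches \cite{feng2021kd3a}, is that the two objectives share the same global optimizer ($h(\x)=\hat{y}^{Syn}$) and that the KD loss is a consistent surrogate for the task risk, rather than a literal pointwise identity.
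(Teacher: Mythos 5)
Your proposal is correct, but note that the paper does not actually prove this lemma at all: it is imported verbatim from Appendix~A of the cited KD3A paper, whose argument identifies $L_{KD}$ with the cross-entropy on the extended domain whose hard labels are $\argmax_k \hat{y}^{Syn}_k$ (up to the model-independent entropy $H(\hat{y}^{Syn})$) and then appeals to the standard equivalence between cross-entropy minimization and $0$-$1$ risk minimization; Pinsker's inequality only enters downstream, in the paper's Claim on $\epsilon_{P_i^S}$. Your route supplies what neither the paper nor that citation spells out: an explicit, quantitative calibration bound $\1[\argmax_k h(\x)_k \neq c] \le H(\hat{y}^{Syn},h(\x))/(\hat{y}^{Syn}_c \log 2)$, which is a clean and correct way to make "equals to optimizing" precise as a one-sided surrogate guarantee plus a shared global optimizer at $h(\x)=\hat{y}^{Syn}$. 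One remark: the obstacle you anticipate is not actually an obstacle. Since $c=\argmax_k \hat{y}^{Syn}_k$ is the largest coordinate of a probability vector, $\hat{y}^{Syn}_c \ge 1/K$ always, so your constant is uniformly bounded by $K/\log 2$ and no lower-bound assumption on the dominant coordinate is needed; the constant would only degenerate if the \emph{argmax} coordinate could vanish, which it cannot. With that observation your bound holds unconditionally, and taking expectations gives $\epsilon_{\hat{P}^{Syn}} \le (K/\log 2)\,\E[L_{KD}] + \mathrm{const}$ with no caveats.
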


\begin{claim}\label{Claim: equivalent}
With the training source data at $i$th client as ${P}_i^{S}$ with the component weight $\boldsymbol{\alpha} = [\alpha, \alpha^{Syn},{\alpha}_{KD}^{Syn}]^\top$ on the local data, virtual data and extended KD data, $\epsilon_{{P}_i^S}(h)$ is minimized by optimizing the loss:
\begin{align}
    \min_{M\in\mathcal{M}} \alpha\mathbb{E}_{(\x,y)\sim {P}_i}L_{CE}(y,M(\x))\!+ \!
    \alpha^{Syn}\mathbb{E}_{(\x,y)\sim {P}^{Syn}}L_{CE}(y,M(\x))\!+ \!
    {\alpha}_{KD}^{Syn}\mathbb{E}_{(\x,y)\sim {P}_{KD}^{Syn}}L_{KL}(y\|M(\x))
\end{align}
\end{claim}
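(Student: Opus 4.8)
The plan is to exploit the linearity of the $0$-$1$ risk $\epsilon$ over the convex combination defining $P_i^S$, and then to relate each resulting component risk to the surrogate loss that is actually optimized on that component. I would first recall from the notation table that $\epsilon_{P}(M) = \mathbb{E}_{\x \sim P}[\ind(M(\x) \neq f^P(\x))]$. Because $P_i^S = \alpha P_i + \alpha^{Syn} P^{Syn} + \alpha^{Syn}_{KD} P^{Syn}_{KD}$ is a mixture whose components carry labeling functions that agree on overlapping supports, the optimal labeling function $f^{P_i^S}$ restricts to $f^{P_i}$, $f^{Syn}$, and $f^{Syn}_{KD}$ on the respective components, and linearity of expectation over the mixing measure yields the clean decomposition
\begin{align}
\epsilon_{P_i^S}(M) = \alpha\, \epsilon_{P_i}(M) + \alpha^{Syn}\, \epsilon_{P^{Syn}}(M) + \alpha^{Syn}_{KD}\, \epsilon_{P^{Syn}_{KD}}(M). \notag
\end{align}

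Next I would treat the three risk terms separately. For the local term $\epsilon_{P_i}(M)$ and the synthetic term $\epsilon_{P^{Syn}}(M)$, each is a standard classification $0$-$1$ risk against a hard labeling function, and the cross-entropy loss $L_{CE}$ is the usual classification-calibrated surrogate: its population minimizer coincides with the Bayes-optimal classifier, which also minimizes the $0$-$1$ risk, so minimizing $\mathbb{E}_{P_i} L_{CE}$ and $\mathbb{E}_{P^{Syn}} L_{CE}$ drives down $\epsilon_{P_i}(M)$ and $\epsilon_{P^{Syn}}(M)$. For the knowledge-distillation term I would invoke Lemma~\ref{Lem:KL_to_bound}, which states precisely that, on the extended domain $(\x^{Syn}, \hat{y}^{Syn}) \sim P^{Syn}_{KD}$, optimizing the KD loss $L_{KL}(\hat{y}^{Syn} \| M(\x))$ is equivalent to optimizing the task risk $\Pr[M(\x) \neq \arg\max \hat{y}^{Syn}]$. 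Identifying $f^{Syn}_{KD}(\x) = \arg\max \hat{y}^{Syn}$, the hard label of the consensus logit from Definition~\ref{Def: Extended KD}, this task risk is exactly $\epsilon_{P^{Syn}_{KD}}(M)$.

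Assembling the pieces with the matching weights $\bm{\alpha}$, the three-term objective $\alpha \mathbb{E}_{P_i} L_{CE} + \alpha^{Syn} \mathbb{E}_{P^{Syn}} L_{CE} + \alpha^{Syn}_{KD} \mathbb{E}_{P^{Syn}_{KD}} L_{KL}$ shares its minimizer with the weighted sum of component risks, i.e.\ with $\epsilon_{P_i^S}(M)$, which is the assertion. The main obstacle is making the word \emph{equivalent} precise: cross-entropy is only a surrogate and not literally equal to the $0$-$1$ loss, so the cleanest route is to argue equivalence at the level of minimizers—both objectives are minimized by the hypothesis matching the component labeling functions—rather than as a pointwise identity, and to treat the KD term through Lemma~\ref{Lem:KL_to_bound} which supplies exactly this minimizer-level correspondence. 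A secondary subtlety is justifying the linear decomposition of $\epsilon_{P_i^S}$, which requires labeling consistency across the mixture components; since $P^{Syn}$ and $P^{Syn}_{KD}$ share the same feature marginal (Definition~\ref{Def: Extended KD}) and the synthetic labels are designed to approximate $f^T$, I would make this consistency an explicit standing assumption.
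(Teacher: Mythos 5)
Your proposal is correct at the level of rigor the claim itself demands, but it inverts the paper's argument, and the two routes are genuinely different. The paper decomposes the \emph{surrogate} loss rather than the $0$-$1$ risk: it writes $\mathbb{E}_{(\x,y)\sim P_i^S} L_{KL}(y\|M(\x))$ as the exact $\boldsymbol{\alpha}$-weighted sum of the component KL losses---an unconditional identity, by linearity of expectation over a mixture of joint distributions---then converts the first two KL terms to CE via the identity $L_{KL}(y\|M(\x)) = L_{CE}(y,M(\x)) - H(y)$, where the entropy $H(y)$ is model-independent, and only in the final step connects minimization of this surrogate to minimization of $\epsilon_{P_i^S}$ via Lemma~\ref{Lem:KL_to_bound} together with Pinsker's inequality. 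You instead decompose $\epsilon_{P_i^S}$ itself over the mixture and handle each component with a calibration argument (CE calibration for the local and synthetic terms, Lemma~\ref{Lem:KL_to_bound} for the KD term). The trade-off is this: the paper's decomposition and KL-to-CE conversion hold pointwise for every $M$, with no hypotheses on labeling functions, deferring all looseness to one final Pinsker step; your decomposition requires the labeling-consistency assumption you flag, and that assumption is genuinely delicate here, because $P^{Syn}$ and $P^{Syn}_{KD}$ share the \emph{same} feature support (Definition~\ref{Def: Extended KD}) but carry different labels---hard distillation labels versus consensus-logit labels---so agreement on overlaps is exactly what can fail rather than a harmless technicality. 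You also correctly notice the second soft spot in your assembly: same-minimizer relations do not survive weighted sums in general, so you must posit a single hypothesis in $\mathcal{M}$ realizing all component labeling functions simultaneously, which is an additional richness/realizability assumption the paper never needs. What your route buys is staying entirely inside the Ben-David $0$-$1$ framework and avoiding Pinsker; what the paper's route buys is that everything before its last sentence is an exact identity.
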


\begin{proof}
Note that 
\begin{align}
    &\min_{M\in\mathcal{M}} \mathbb{E}_{(\x,y)\sim {P}_i^{(S)}}L_{KL}(y\|M(\x)) \notag\\
    &\propto \!\!\min_{M\in\mathcal{M}} \alpha\mathbb{E}_{(\x,y)\sim {P}_i}L_{KL}(y\|M(\x))\!+ \!
    \alpha^{Syn}\mathbb{E}_{(\x,y)\sim {P}^{Syn}}L_{KL}(y\|M(\x)) \!+ \!
    {\alpha}_{KD}^{Syn}\mathbb{E}_{(\x,y)\sim {P}_{KD}^{Syn}}L_{KL}(y\|M(\x)) \notag\\
     &\stackrel{(a)}{\propto} \!\!\min_{M\in\mathcal{M}} \alpha\mathbb{E}_{(\x,y)\sim {P}_i}L_{CE}(y,M(\x))\!+ \!
    \alpha^{Syn}\mathbb{E}_{(\x,y)\sim {P}^{Syn}}L_{CE}(y,M(\x))\!+ \!
    {\alpha}_{KD}^{Syn}\mathbb{E}_{(\x,y)\sim {P}_{KD}^{Syn}}L_{KL}(y\|M(\x)) \notag
\end{align}
where (a) is because $L_{{KL}}(y\|h(\x)) = L_{CE}(y,h(\x)) - H(y)$, where $H(y) = - y\log(y)$ is a constant depending on data distribution.  With Lemma~\ref{Lem:KL_to_bound} and Pinsker’s inequality, it is easy to show that $\epsilon_{{P}_i^S}(h)$ is minimized by optimizing the above loss.
\end{proof}

\section{Privacy Discussion for \ours{}}
\label{app:privacy}

Sharing image-level information among clients may raise privacy concerns. However, we claim that decentralized FL with both data and model heterogeneities is an extremely challenging setting, where existing solutions either require sharing real public data~\cite{lin2020ensemble,huang2022learn} or synthetic data generated from real data with GAN-based generator~\cite{zhang2022dense,zhang2022fine}. Instead, we propose to use distribution matching to distill data, a simple and less data-greedy strategy, for data synthesis. Research has shown that using distilled data can defend against privacy attacks~\cite{dong2022privacy} such as membership inference attacks (MIA)~\cite{shokri2017membership} and gradient inversion attacks~\cite{huang2021evaluating}. We show the \ours{}'s defense against MIA~\cite{carlini2022membership} in Appendix~\ref{app:MIA}. In addition, recent papers have successfully applied differential privacy (DP)~\cite{abadi2016deep} mechanism into data distillation~\cite{xiong2023feddm,wang2023fed} to ensure privacy. We also discuss the feasibility of adding DP into data distillation process following~\cite{xiong2023feddm} and show that \ours{} is still effective using the DP synthetic anchor data in Appendix~\ref{app:DP}. 
We would like to put more emphasis on our proposed methodology and theoretical analysis in the main text, as sharing synthetic data commonly exists in the related work mentioned above and we fairly align with their settings in our comparisons. Thus, we consider the potential privacy risk of FL with \ours{} is beyond the main scope of our study and leave it in our Appendix.

\subsection{Membership Inference Attack}\label{app:MIA}
We show what under the basic setting of \ours{} (\ie{} not applying Differential Privacy when generating local synthetic data), we can better protect the membership information of local real data than local training or FedAvg~\cite{mcmahan2017communication} on local real data only when facing Membership Inference Attack (MIA) on trained local models. Although we share the logits during communication, it's important to note that these logits are from synthetic anchor data and not real data that needs protection. Therefore, we cannot use MIA methods that rely on logits. Instead, we  perform a strong MIA attack recently proposed and evaluate it following the approach in \cite{carlini2022membership}.

\begin{figure}[ht]
\centering
    \subfloat[SVHN]{\includegraphics[width=0.33\linewidth]{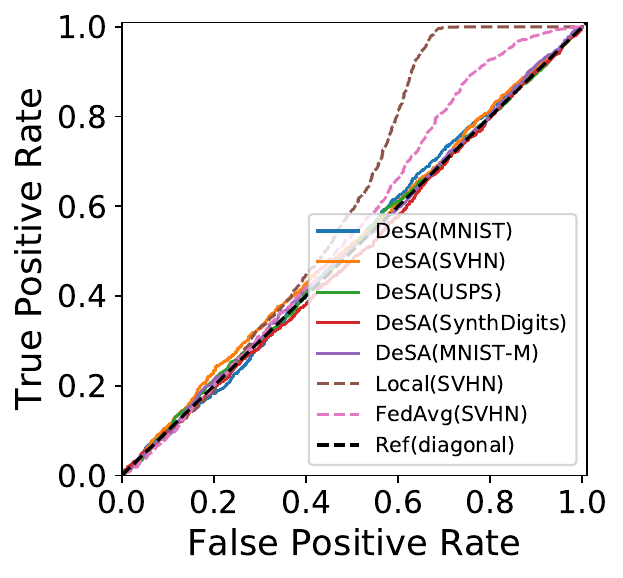}}
    \subfloat[SynthDigits]{\includegraphics[width=0.33\linewidth]{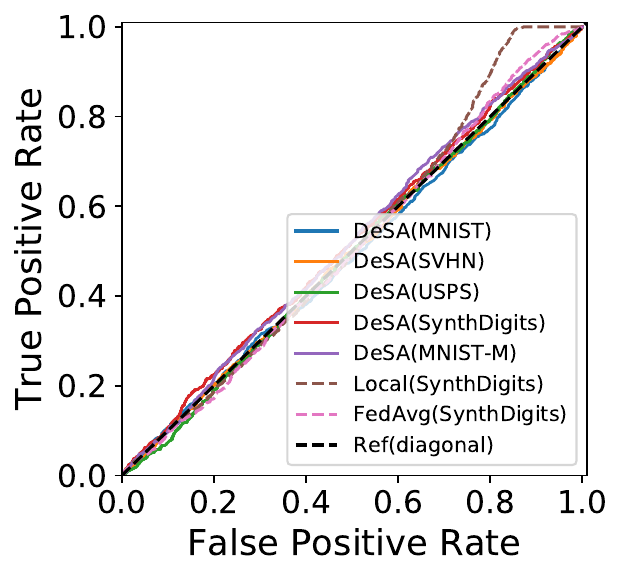}}
    \subfloat[MNIST-M]{\includegraphics[width=0.33\linewidth]{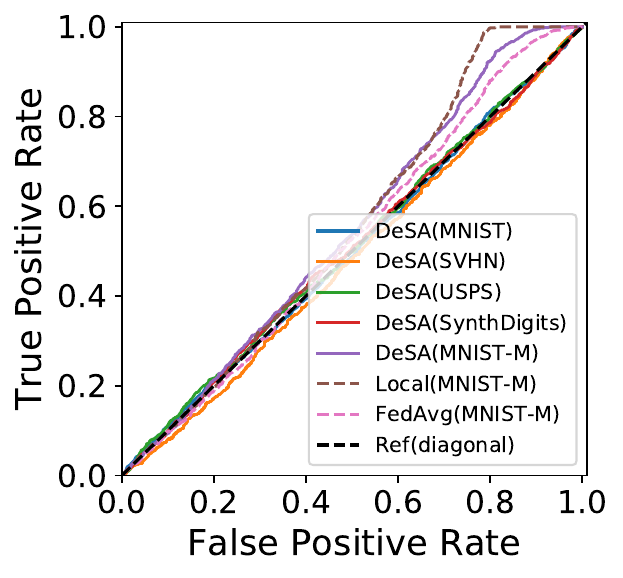}}
\caption{MIA on the models trained by SVHN, SynthDigits, and MNIST-M clients. Observe that the synthetic data sharing of \ours{} does not reveal other clients' local data identity information.}
\label{fig:mia}
\end{figure}

The goal of the experiment is to investigate whether our local model is vulnerable to MIA, namely leaking information about local real datasets' membership. To compare and demonstrate the effectiveness of the chosen attack, we also present results from local training and FedAvg training. We conduct MIA experiments using \texttt{DIGITS}. The MIA for local training and FedAvg is related to real local training data. Since we use synthetic anchor data generated from other clients with data distillation, we also provide MIA results for inferring real data of other clients. For example, if attacking SVHN's local model, local training and FeAvg report the MIA results on SVHN only, while we also report MIA results on MNIST, USPS, SynthDigits, MNIST-M for \ours{}. 

Using the metric in \cite{carlini2022membership}, the results are shown in Figure~\ref{fig:mia}. The Ref(diagonal) line indicates MIA \textbf{cannot} tell the differences between training and testing data. If the line bends towards True Positive Rate, it means the membership form the training set can be inferred. 
It is shown that all the MIA curves of targeted and other cients lie along the Ref line for \ours{}'s model, which indicates that the membership of each training sets is well protected given the applied attack. While the curves for the MIA attacks on FedAvg and local training with SVHN dataset are all offset the Ref (diagonal) line towards True Positive, indicating they are more vulnerable to MIA and leaking training data information.

\subsection{Differential Privacy for Data Synthesis}
\label{app:DP}

To enhance the data privacy-preservation on shared synthetic anchor data, we apply the Differential Privacy stochastic gradient descent (DP-SGD)~\cite{abadi2016deep} for the synthetic image generation.
DP-SGD protects local data information via noise injection on clipped gradients. In our experiments, we apply Gaussian Mechanism for the inejcted noise. 
Specifically, we first sample a class-balanced subset from the raw data to train the objective~\ref{eq:MMD}.
We set up the batch size as 256.
For each iteration, we clip the gradient so that its $l_2$-norm is 2.
The injected noises are from $\mathcal{N}(0, 1.2)$. 
This step ensures $(\epsilon, \delta)$-DP with $(\epsilon, \delta)$ values in \{(7.622, 0.00015), (10.3605, 0.00021), (8.6677, 0.00017), (7.3174, 0.00014), (7.6221, 0.00015)\} guarantees for \{MNIST, SVHN, USPS, SynthDigits, MNIST-M\}, respectively. We visualize the non-DP and DP synthetic images from each clients in \texttt{DIGITS} in Figure~\ref{fig:synthesis_digits} and Figure~\ref{fig:synthesis_digits_dp}, respectively. One can observe that the synthetic data with DP mechanism is noisy and hard to inspect the individual information of the raw data.

\begin{figure}[ht]
	\centering
	\subfloat[Mnist]{  
	   \includegraphics[width=0.19\linewidth]{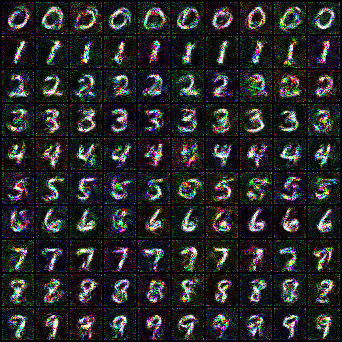}}
	\subfloat[SVHN]{
	   \includegraphics[width=0.19\linewidth]{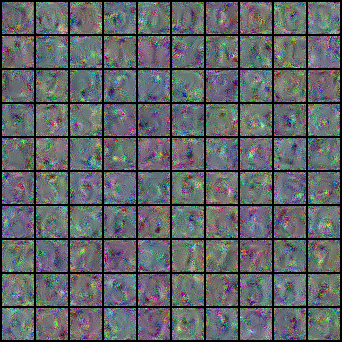}}
	\subfloat[USPS]{
	   \includegraphics[width=0.19\linewidth]{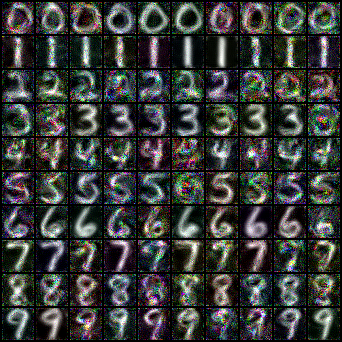}}
	\subfloat[SynthDigits]{
		\includegraphics[width=0.19\linewidth]{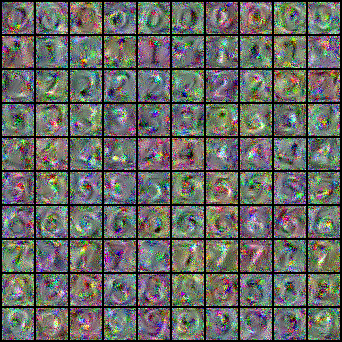}}
        \subfloat[Mnist-M]{
		\includegraphics[width=0.19\linewidth]{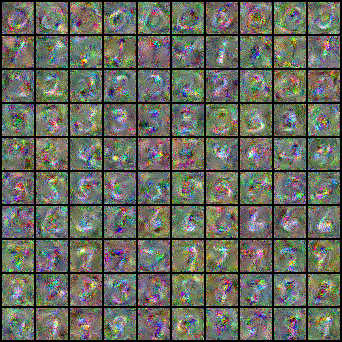}}
	\caption{Visualization of the global and local synthetic images from the \texttt{DIGITS} dataset. (a) visualized the MNIST client; (b) visualized the SVHN client; (c) visualized the USPS client; (d) visualized the SynthDigits client; (e) visualized the MNIST-M client; (f) visualized the server synthetic data.}
		\label{fig:synthesis_digits}
\end{figure}

\begin{figure}[ht]
	\centering
	\subfloat[Mnist]{  
	   \includegraphics[width=0.19\linewidth]{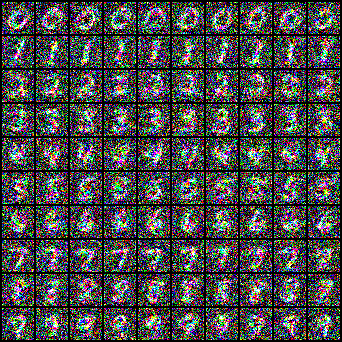}}
	\subfloat[SVHN]{
	   \includegraphics[width=0.19\linewidth]{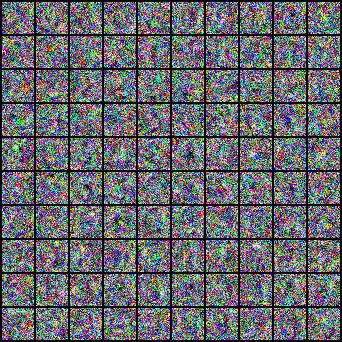}}
	\subfloat[USPS]{
	   \includegraphics[width=0.19\linewidth]{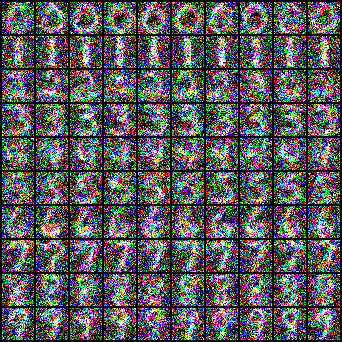}}
	\subfloat[SynthDigits]{
		\includegraphics[width=0.19\linewidth]{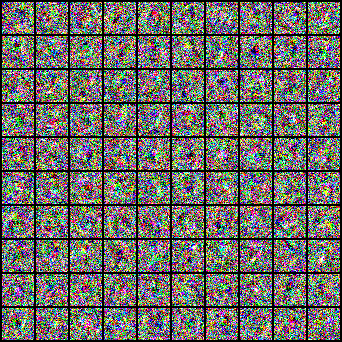}}
        \subfloat[Mnist-M]{
		\includegraphics[width=0.19\linewidth]{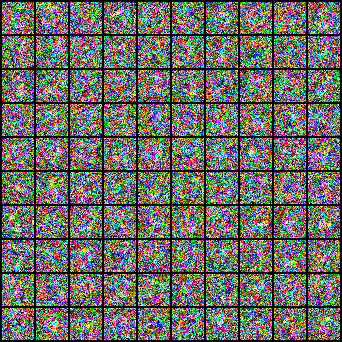}}
	\caption{Visualization of the global and local synthetic images from the \texttt{DIGITS} dataset with \textbf{DP mechanism}. (a) visualized the MNIST client; (b) visualized the SVHN client; (c) visualized the USPS client; (d) visualized the SynthDigits client; (e) visualized the MNIST-M client; (f) visualized the server synthetic data.}
		\label{fig:synthesis_digits_dp}
\end{figure}

We replace the synthetic data by DP synthetic data and perform \texttt{DIGITS} experiments, and the result is shown in Table~\ref{tab:DP_compare}. It can be observed that although \ours{}'s performance slightly drops due to the DP mechanism, the averaged inter and intra-accuracy are in the second place, which indicates that \ours{} is robust as long as we can synthesize images that roughly captures the global data distribution.

\begin{table}[h]
\centering
\caption{We add the the results for \ours{} trained with DP synthetic anchor data into our Table~\ref{tab:hetero_model}. The best result is marked as \textbf{bold}, and the second best is marked as \textcolor{blue}{blue}. The table shows that \ours{} with DP synthetic anchor data can still obtain comparable results as \ours{} with non-DP synthetic data.}
\resizebox{0.7\linewidth}{!}{
\begin{tabular}{ll|llllll}
\toprule
\multicolumn{2}{l|}{\multirow{2}{*}{}}                 & \multicolumn{6}{l}{DIGITS}                       \\ \cline{3-8} 
\multicolumn{2}{l|}{}                                  & \multicolumn{1}{l|}{MN(C)} & \multicolumn{1}{l|}{SV(A)} & \multicolumn{1}{l|}{US(C)} & \multicolumn{1}{l|}{Syn(A)} & \multicolumn{1}{l|}{MM(C)} & Avg   \\ \hline
\multicolumn{2}{l|}{FedHe}                             & \multicolumn{1}{l|}{59.51} & \multicolumn{1}{l|}{66.67} & \multicolumn{1}{l|}{49.89} & \multicolumn{1}{l|}{75.39}  & \multicolumn{1}{l|}{71.57} & 64.81 \\ \hline
\multicolumn{1}{l|}{\multirow{2}{*}{FedDF}} & Cifar100 & \multicolumn{1}{l|}{65.98}     & \multicolumn{1}{l|}{65.21}     & \multicolumn{1}{l|}{61.30}     & \multicolumn{1}{l|}{69.65}      & \multicolumn{1}{l|}{\textbf{74.48}}     & 67.32     \\ \cline{2-8} 
\multicolumn{1}{l|}{}                       & FMNIST   & \multicolumn{1}{l|}{43.05}     & \multicolumn{1}{l|}{69.14}     & \multicolumn{1}{l|}{44.95}     & \multicolumn{1}{l|}{74.67}      & \multicolumn{1}{l|}{71.27}     & 60.61     \\ \hline
\multicolumn{1}{l|}{\multirow{2}{*}{FCCL}}  & CIFAR100 & \multicolumn{1}{l|}{-}     & \multicolumn{1}{l|}{-}     & \multicolumn{1}{l|}{-}     & \multicolumn{1}{l|}{-}      & \multicolumn{1}{l|}{-}     & -     \\ \cline{2-8} 
\multicolumn{1}{l|}{}                       & FMNIST   & \multicolumn{1}{l|}{46.43}     & \multicolumn{1}{l|}{61.02}     & \multicolumn{1}{l|}{42.64}     & \multicolumn{1}{l|}{63.05}      & \multicolumn{1}{l|}{66.39}     & 55.91     \\ \hline
\multicolumn{2}{l|}{FedProto}                          & \multicolumn{1}{l|}{62.59} & \multicolumn{1}{l|}{\textcolor{blue}{71.74}} & \multicolumn{1}{l|}{58.52} & \multicolumn{1}{l|}{\textbf{81.19}}  & \multicolumn{1}{l|}{\textcolor{blue}{74.44}} & 69.70 \\ \hline
\multicolumn{2}{l|}{\ours($D_{\rm VHL}^{Syn}$)}                              & \multicolumn{1}{l|}{54.40} & \multicolumn{1}{l|}{62.03} & \multicolumn{1}{l|}{42.34} & \multicolumn{1}{l|}{67.75}  & \multicolumn{1}{l|}{73.03} & 59.91 \\ \hline
\multicolumn{2}{l|}{\ours}                              & \multicolumn{1}{l|}{\textbf{70.12}} & \multicolumn{1}{l|}{\textbf{76.17}} & \multicolumn{1}{l|}{\textbf{71.17}} & \multicolumn{1}{l|}{\textcolor{blue}{81.10}}  & \multicolumn{1}{l|}{73.83} & \textbf{74.47} \\ \hline
\multicolumn{2}{l|}{\ours(DP)}                          & \multicolumn{1}{l|}{\textcolor{blue}{69.06}} & \multicolumn{1}{l|}{71.54} & \multicolumn{1}{l|}{\textcolor{blue}{63.92}} & \multicolumn{1}{l|}{78.93}  & \multicolumn{1}{l|}{73.16} & \textcolor{blue}{71.12} \\ \bottomrule
\end{tabular}
}
\label{tab:DP_compare}
\end{table}

\section{Local Epoch}
\label{app:localepoch}
Here we present the effect of local epochs on \ours{}. To ensure fair comparison, we fix the total training iterations for the three experiments, \ie we set FL communication rounds to 50 when local epochs is 2 to match up with local epoch equals to 1 when FL communication rounds is 100. Figure~\ref{tab:epoch} shows that \ours{} is robust to various local epoch selections. The experiment is run on \texttt{DIGITS} dataset, and we report the global accuracy.

\begin{table}[]
\centering
\caption{Ablation study on number of local epochs. The experiment is run on \texttt{DIGITS} dataset.}
\resizebox{0.4\linewidth}{!}{
\begin{tabular}{l|l|l|l}
\hline
Local Epoch & 1     & 2     & 5     \\ \hline
Global Acc  & 74.47 & 74.15 & 74.34 \\ \hline
\end{tabular}
}
\label{tab:epoch}
\end{table}

\section{Communication Overhead}
\label{app:communication}

As noted in Section~\ref{sec:setting}, \ours{} only requires sharing logits w.r.t. Global synthetic data during training. Thus it has a relatively low communication overhead compared to baseline methods which require sharing model parameters. For fair comparison, we analyze the communication cost based on the number of parameters Pre-FL and During-FL in Table~\ref{tab:communication}. Note that we show the number of parameters for one communication round for During-FL, and the total communication cost depends on the number of global iterations. One can observe that sharing logits can largely reduce the communication overhead.  For example, if we use ConvNet as our model, set IPC=50, and train for 100 global iteration, the total number of parameters for communication for DeSA will be 30.7 K $\times$ 50 (Pre-FL) + 10 (number of classes) $\times$ 50 (images/class) $\times$ 10 (logits/image) $\times$ 100 (global iteration) = 2.04M. In comparison, baseline methods need to share 0 (Pre-FL) + 320K (parameters/iteration) $\times$ 100 (global iteration) = 32M, which is much larger than DeSA. Under model heterogeneity experimental setting, clients using AlexNet would suffer even higher total communication cost, which is 0 (Pre-FL) + 1.87M (parameters/iteration) $\times$ 100 (global iteration) = 187M.

\begin{table}[ht]
\centering
\caption{Comparison of communication overhead. Note that for \ours{}, we only share virtual global anchor logits during training. The total communication cost counts the total parameter transferred for 100 global iterations. IPC is the synthesized images per class, and C is the number of classes.}
\begin{tabular}{l|l|l|l}
\toprule
    & ConvNet & AlexNet & Global Anchor Logits \\ \hline
Pre-FL & 0   & 0  & 30.7 K $\times$ IPC $\times$ C    \\ \hline
During-FL & 320 K   & 1.87 M  & 100 $\times$ IPC  $\times$ C    \\ \hline
Total & 32M   & 187M  & 40.7K $\times$ IPC $\times$ C     \\ \bottomrule
\end{tabular}
\label{tab:communication}
\end{table}

\section{Datasets and Synthetic Images}
\label{app:synthetic_images}

\begin{figure}[h]
	\centering
	\subfloat[]{  
		\includegraphics[width=0.198\linewidth]{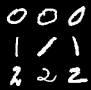}}
	\subfloat[]{
	\includegraphics[width=0.195\linewidth]{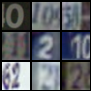}}
	\subfloat[]{
	\includegraphics[width=0.2\linewidth]{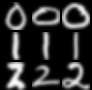}}
	\subfloat[]{
	\includegraphics[width=0.195\linewidth]{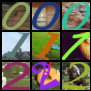}}
        \subfloat[]{
	\includegraphics[width=0.195\linewidth]{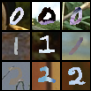}}
  
	\caption{Visualization of the original digits dataset. (a) visualized the MNIST client; (b) visualized the SVHN client; (c) visualized the USPS client; (d) visualized the SynthDigits client; (e) visualized the MNIST-M client.}
		\label{fig:original_digits}
\end{figure}

\begin{figure}[ht]
	\centering
	\subfloat[Amazon]{  
	   \includegraphics[width=0.24\linewidth]{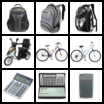}}
	\subfloat[Caltech]{
	   \includegraphics[width=0.24\linewidth]{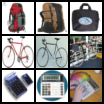}}
	\subfloat[DSLR]{
	   \includegraphics[width=0.24\linewidth]{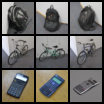}}
	\subfloat[Webcam]{
		\includegraphics[width=0.24\linewidth]{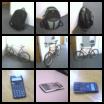}}
	\caption{Visualization of the original digits dataset. (a) visualized the Amazon client; (b) visualized the Caltech client; (c) visualized the DSLR client; (d) visualized the Webcam client}
		\label{fig:original_office}
\end{figure}

\begin{figure}[h]
	\centering
 \subfloat[]{  
		\includegraphics[width=0.155\linewidth]{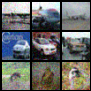}}
	\subfloat[]{  
		\includegraphics[width=0.157\linewidth]{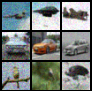}}
	\subfloat[]{
		\includegraphics[width=0.155\linewidth]{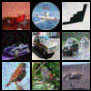}}
	\subfloat[]{
	\includegraphics[width=0.157\linewidth]{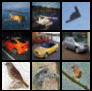}}
        \subfloat[]{
	\includegraphics[width=0.155\linewidth]{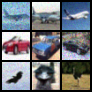}}
        \subfloat[]{
	\includegraphics[width=0.157\linewidth]{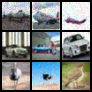}}
	\caption{Visualization of the original \texttt{CIFAR10C}. Sampled images from the first six clients.}
    \label{fig:original_cifar10c}
\end{figure}

\paragraph{Detailed Information of Selected Datasets}
1) \texttt{DIGITS}=\{MNIST~\cite{lecun1998gradient}, SVHN~\cite{netzer2011reading}, USPS~\cite{hull1994database}, SynthDigits~\cite{ganin2015unsupervised}, MNIST-M~\cite{ganin2015unsupervised}\} consists of 5 digit datasets with handwritten, real street and synthetic digit images of $0, 1, \cdots, 9$. Thus, we assume 5 clients for this set of experiments. We use \texttt{DIGITS} as baseline to show \ours{} can handle FL under large domain shift. Example images can be found in Figure~\ref{fig:original_digits}.\\
2) \texttt{OFFICE}=\{Amazon~\cite{saenko2010adapting}, Caltech~\cite{griffin2007caltech}, DSLR~\cite{saenko2010adapting}, and WebCam~\cite{saenko2010adapting}\} consists of four data sources from Office-31~\cite{saenko2010adapting} (Amazon, DSLR, and WebCam) and Caltech-256~\cite{griffin2007caltech} (Caltech), resulting in four clients. Each client possesses images that were taken using various camera devices in different real-world environments, each featuring diverse backgrounds. We show \ours{} can handle FL under large domain shifted \texttt{real-world} images using \texttt{OFFICE}. Example images can be found in Figure~\ref{fig:original_office}.\\ 
3) \texttt{CIFAR10C} consists subsets extracted from Cifar10-C~\cite{hendrycks2019benchmarking}, a collection of augmented Cifar10~\cite{krizhevsky2009learning} that applies 19 different corruptions. We employ a Dirichlet distribution with $\beta=2$ for the purpose of generating three partitions within each distorted non-IID dataset. As a result, we have 57 clients with domain- and label-shifted datasets. Example images can be found in Figure~\ref{fig:original_cifar10c}.

\paragraph{Synthetic Data Generation}
We fix ConvNet as the backbone for data synthesis to avoid additional domain shift caused by different model architectures. We set learning rate to 1 and use SGD optimizer with momentum = 0.5. The batch size for \texttt{DIGITS} and \texttt{CIFAR10} is set to 256, while we use 32 for \texttt{OFFICE} as it's clients has fewer data points. For the same reason, we use 500 synthetic data points for \texttt{DIGITS} and \texttt{CIFAR10C}, and we set 100 synthetic data points for \texttt{OFFICE}. The training iteration for \texttt{DIGITS} and \texttt{OFFICE} is 1000, and we set 2000 for \texttt{CIFAR10C} since it contains more complex images.

We show the local synthetic images and global anchor images of \texttt{DIGITS}, \texttt{OFFICE}, and \texttt{CIFAR10C} in Figure~\ref{fig:sampled_synthesis_digits}, Figure~\ref{fig:sampled_synthesis_office}, and Figure~\ref{fig:sampled_synthesis_cifar10c}, respectively.

\begin{figure}[h]
	\centering
	\subfloat[Mnist]{  
		\includegraphics[width=0.155\linewidth]{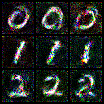}}
	\subfloat[SVHN]{
	\includegraphics[width=0.155\linewidth]{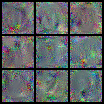}}
	\subfloat[USPS]{
		\includegraphics[width=0.155\linewidth]{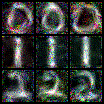}}
	\subfloat[SynthDigits]{
		\includegraphics[width=0.155\linewidth]{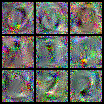}}
        \subfloat[Mnist-M]{
		\includegraphics[width=0.155\linewidth]{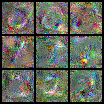}}
        \subfloat[Average]{
		\includegraphics[width=0.155\linewidth]{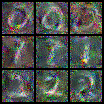}}
	\caption{Visualization of the sampled global and local synthetic images from the \texttt{DIGITS} dataset. (a) visualized the MNIST client's synthetic data; (b) visualized the SVHN client's synthetic data; (c) visualized the USPS client's synthetic data; (d) visualized the SynthDigits client's synthetic data; (e) visualized the MNIST-M client's synthetic data; (f) visualized the server synthetic data.}
		\label{fig:sampled_synthesis_digits}
\end{figure}

\begin{figure}[h]
	\centering
	\subfloat[Amazon]{  
		\includegraphics[width=0.18\linewidth]{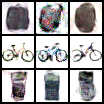}}
	\subfloat[Caltech]{
	\includegraphics[width=0.18\linewidth]{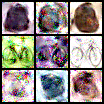}}
	\subfloat[DSLR]{
		\includegraphics[width=0.18\linewidth]{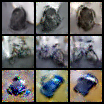}}
	\subfloat[Webcam]{
		\includegraphics[width=0.18\linewidth]{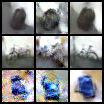}}
        \subfloat[Average]{
		\includegraphics[width=0.18\linewidth]{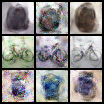}}
	\caption{Visualization of the sampled global and local synthetic images from the \texttt{OFFICE} dataset. (a) visualized the Amazon client's synthetic data; (b) visualized the Caltech client's synthetic data; (c) visualized the DSLR client's synthetic data; (d) visualized the Webcam client's synthetic data; (e) visualized the averaged synthetic data.}
		\label{fig:sampled_synthesis_office}
\end{figure}

\begin{figure}[h]
	\centering
	\subfloat[Client0]{  
		\includegraphics[width=0.155\linewidth]{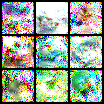}}
	\subfloat[Client1]{
	\includegraphics[width=0.155\linewidth]{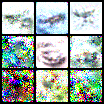}}
	\subfloat[Client2]{
		\includegraphics[width=0.155\linewidth]{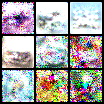}}
	\subfloat[Client3]{
		\includegraphics[width=0.155\linewidth]{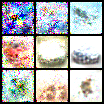}}
        \subfloat[Client4]{
		\includegraphics[width=0.155\linewidth]{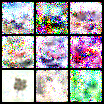}}
        \subfloat[Average]{
		\includegraphics[width=0.155\linewidth]{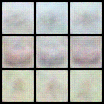}}
	\caption{Visualization of the sampled global and local synthetic images from the first 5 clients in \texttt{CIFAR10C} dataset. (a) visualized the first 's synthetic data; (b) visualized the second client's synthetic data; (c) visualized the third client's synthetic data; (d) visualized the forth client's synthetic data; (e) visualized the fifth client's synthetic data; (f) visualized the server synthetic data.}
		\label{fig:sampled_synthesis_cifar10c}
\end{figure}

\newpage
\section{Model Architectures}
\label{app:model_arch}

We use ConvNet to perform data distillation for the best synthesis quality. For model hetero genenity scenarios, we randomly select classification model architectures from \{AlexNet, ConvNet\}. The detailed setup for bot models are depicted in Table~\ref{tab:alexnet} and Table~\ref{tab:convnet}

\begin{table}[ht]
\caption{AlexNet architecture. For the convolutional layer (Conv2D), we list parameters with a sequence of input and output dimensions, kernel size, stride, and padding. For the max pooling layer (MaxPool2D), we list kernel and stride. For a fully connected layer (FC), we list input and output dimensions.}
\label{tab:alexnet}
\renewcommand\arraystretch{1.2}
\centering
\scalebox{0.99}{
\begin{tabular}{l|cccc}
\toprule
\multicolumn{1}{c|}{\multirow{1}{*}{Layer}} & \multicolumn{4}{c}{Details} \\ \hline
\multicolumn{1}{c|}{\multirow{1}{*}{1}}& \multicolumn{4}{c}{Conv2D(3, 128, 5, 1, 4), ReLU, MaxPoo2D(2, 2)} \\ \hline
\multicolumn{1}{c|}{\multirow{1}{*}{2}}& \multicolumn{4}{c}{Conv2D(128, 192, 5, 1, 2), ReLU, MaxPoo2D(2, 2)} \\ \hline
\multicolumn{1}{c|}{\multirow{1}{*}{3}}& \multicolumn{4}{c}{Conv2D(192, 256, 3, 1, 1), ReLU} \\ \hline

\multicolumn{1}{c|}{\multirow{1}{*}{4}}& \multicolumn{4}{c}{Conv2D(256, 192, 3, 1, 1), ReLU} \\ \hline
\multicolumn{1}{c|}{\multirow{1}{*}{5}}& \multicolumn{4}{c}{Conv2D(192, 192, 3, 1, 1), ReLU, MaxPoo2D(2, 2)} \\ \hline

\multicolumn{1}{c|}{\multirow{1}{*}{22}}& \multicolumn{4}{c}{FC(3072, num\_class)} \\ \bottomrule
\end{tabular}%
}
\end{table}

\begin{table}[ht]
\caption{ConvNet architecture. For the convolutional layer (Conv2D), we list parameters with a sequence of input and output dimensions, kernel size, stride, and padding. For the max pooling layer (MaxPool2D), we list kernel and stride. For a fully connected layer (FC), we list the input and output dimensions. For the GroupNormalization layer (GN), we list the channel dimension.}
\label{tab:convnet}
\renewcommand\arraystretch{1.2}
\centering
\scalebox{0.99}{
\begin{tabular}{l|cccc}
\toprule
\multicolumn{1}{c|}{\multirow{1}{*}{Layer}} & \multicolumn{4}{c}{Details} \\ \hline
\multicolumn{1}{c|}{\multirow{1}{*}{1}}& \multicolumn{4}{c}{Conv2D(3, 128, 3, 1, 1), GN(128), ReLU, AvgPool2d(2,2,0)} \\ \hline

\multicolumn{1}{c|}{\multirow{1}{*}{2}}& \multicolumn{4}{c}{Conv2D(128, 118, 3, 1, 1), GN(128), ReLU, AvgPool2d(2,2,0)} \\ \hline

\multicolumn{1}{c|}{\multirow{1}{*}{3}}& \multicolumn{4}{c}{Conv2D(128, 128, 3, 1, 1), GN(128), ReLU, AvgPool2d(2,2,0)} \\ \hline

\multicolumn{1}{c|}{\multirow{1}{*}{4}}& \multicolumn{4}{c}{FC(1152, num\_class)} \\ \bottomrule
\end{tabular}%
}
\end{table}

\section{More Related Work}
\label{app:related}

\subsection{Model Homogeneous Federated Learning}
We list down different Model Homogeneous FL approaches in decentralized FL and collaborative methods that are relevant to our setting.
\subsubsection{Decentralized Federated Learning}
In order to tackle training a global model without a server, Decentralized FL methods communicate a set of models through diverse decentralized client-network topologies (such as a ring - \cite{chang2018distributed}, Mesh - \cite{roy2019braintorrent}, or a sequential line \cite{assran2019stochastic}) using different communication protocols such as Single-peer(gossip) or Multiple-Peer(Broadcast). ~\cite{yuan2023peer, sheller2019multi,sheller2020federated} pass a single model from client to client similar to an Incremental Learning setup. In this continual setting, only a \textbf{single model} is trained. \cite{pappas2021ipls,roy2019braintorrent,assran2019stochastic} pass models and aggregate their weights similar to conventional FL.  Since these models use averaged aggregation techniques similar to FedAvg, most of these methods assume  client \textbf{model homogeneity}. \ours{}'s client network topology is similar to that of a Mesh using the broadcast-gossip protocol, where every client samples certain neighbours in each communication round for sharing logits.

None of the works above aim to train various client model types without a server, which is our goal.

\subsubsection{Collaborative Methods}
\cite{fallah2020personalized} uses an MAML(model agnostic meta learning) framework to explicitly train model homogeneous client models to personalize well. The objective function of MAML evaluates the personalized performance assuming a one-step gradient descent update on the subsequent task. \cite{huang2021personalized} modifies the personalized objective by adding an \text{attention inducing term} to the objective function which promotes collaboration between pairs of clients that have similar data.

\cite{ghosh2022efficient} captures settings where different groups of users have their own objectives (learning tasks) but by aggregating their
private data with others in the same cluster (same learning task), they
can leverage the strength in numbers in order to perform
more efficient personalized federated learning 
\cite{donahue2021model} uses game theory to analyze whether a client should jointly train with other clients in a conventional FL setup [\ref{subsec: Conventional FL}] assuming it's primary objective is to minimize the MSE loss on its own private dataset. They also find techniques where it is more beneficial for the clients to create coalitions and train one global model.

All the above works either slightly change the intra-client objective to enable some collaboration between model-homogeneous clients or explicitly create client clusters to collaboratively learn from 
each other. They do not tackle the general objective function that we do- \ref{eq:fmd}

\subsection{Model Heterogeneous Federated Learning}
Model heterogeneous FL approaches relevant to \ours{} broadly come under the following two types.
\subsubsection{Knowledge distillation methods}
\cite{gong2022preserving} proposes FedKD that is a one-shot centralized Knowledge distillation approach on unlabelled public data after the local training stage in-order to mitigate the accuracy drop due to the label shift amongst clients.
DENSE~\cite{zhang2022dense} propose one-shot federated learning to generate decision boundary-aware synthetic data and train the global model on the server side.
FedFTG~\cite{zhang2022fine} finetunes the global model by knowledge distillation with hard sample mining. \cite{yang2021regularized} introduces a method called Personalized Federated Mutual Learning (PFML), which leverages the non-IID properties to create customized models for individual parties. PFML incorporates mutual learning into the local update process within each party, enhancing both the global model and personalized local models. Furthermore, mutual distillation is employed to expedite convergence. The method assumes homogeneity of models for global server aggregation. However, all the above methods are centralized.

\subsubsection{Mutual Learning Methods}

Papers in this area predominantly use ideas from deep-mutual learning \cite{zhang2018deep}
\cite{matsuda2022fedme} uses deep mutual learning to train heterogeneous local models for the sole purpose of personalization. The method creates clusters of clients whose local models have similar outputs. Clients within a cluster exchange their local models in-order to tackle label shift amongst the data points. However, the method is centralized and each client maintains two copies of models, one which is personalized and one that is exchanged. \cite{li2021decentralized} has a similar setting to \cite{chan2021fedhe}, but instead solves the problem in a peer to peer decentralized manner using soft logit predictions on the local data of a client itself.  It makes its own baselines that assume model homogeneity amongst clients, also their technique assumes that there is no covariate shift because it only uses local data for the soft predictions. However, their technique can be modified for model heterogeneity. They report personalization(Intra) accuracies only.

\subsection{Dataset Distillation}

Data distillation methods aim to create concise data summaries $D_{syn}$ that can effectively substitute the original dataset $D$ in tasks such as model training, inference, and architecture search.
Moreover, recent studies have justified that data distillation also preserves privacy \cite{dong2022privacy, carlini2022no} which is
critical in federated learning. In practice, dataset distillation is used in healthcare for medical data
sharing for privacy protection \cite{li2022dataset}. We briefly mention two types of Distillation works below.

\subsubsection{Gradient and Trajectory Matching techniques}
Gradient Matching~\cite{zhao2020dataset} is proposed to make the deep neural network produce similar gradients for both the terse synthetic images and the original large-scale dataset. The objective function involves matching the gradients of the loss w.r.t weights(parameters) evaluated on both $D$ and $D_{syn}$ at successive parameter values during the optimization on the original dataset $D$. Usually the cosine distance is used to measure the difference in gradient direction.
Other works in this area modify the objective function slightly, by either adding class contrastive signals for better stability \cite{lee2022dataset} or by adding same image-augmentations(such as crop, rotate  to both $D$ and $D_{syn}$)\cite{zhao2021dataset}.
A similar technique is that of \cite{cazenavette2022dataset} which tries to match the intermediate parameters in the optimization trajectory of both $D$ and $D_{Syn}$. It is very \textit{computationally expensive} because of a gradient unrolling in the optimization. TESLA \cite{cui2023scaling} attempts at using linear-algebraic manipulations to give better computational guarantees for Trajectory matching

\subsubsection{Distribution Matching techniques}
Distribution matching \cite{zhao2023dataset}
solves the distillation task via a single-level optimization, leading to a \textit{vastly improved scalability}. More
specifically, instead of matching the quality of models on $D$ vs. $D_{syn}$, distribution-matching techniques directly match the distribution of $D$ vs. $D_{syn}$ in a latent encoded space. See \ref{eq:MMD} for the objective function.
CAFE ~\cite{wang2022cafe} further refines the distribution-matching idea by solving a bilevel optimization problem for jointly optimizing a single encoder and the data summary,
rather than using a pre-determined set of encoders
Adversarial techniques using Distribution matching such as IT-GAN \cite{zhao2022synthesizing} and GAN \cite{goodfellow2014generative} aren't suitable for a serverless setting.
Since we aim to mitigate drifts in client-distribution across using our synthetic data, 
Distribution Matching is a more natural option for our work.


\newpage

\end{document}